\theoremstyle{plain}
\newtheorem{theorem}{Theorem}
\newtheorem*{theorem*}{Theorem}
\theoremstyle{definition}
\theoremstyle{remark}
\DeclareMathOperator\supp{supp}
\newcommand{\dataProb}{}
\def\dataProb{\mathcal{P}_{\text{data}}}
\newcommand{\normalDist}{}
\def\normalDist{\mathcal{N}(\mathbf{0}, \mathbf{I})}
\newcommand{\Loss}{}
\def\Loss{\mathcal{L}}
\newcommand{\x}{}
\def\x{\mathbf{x}}
\newcommand{\e}{}
\def\e{\mathbf{e}}
\newcommand{\z}{}
\def\z{\mathbf{z}}
\DeclareMathOperator*{\argmin}{arg\,min} 
\DeclareMathOperator*{\E}{\mathbb{E}}
\title{Score-based Idempotent Distillation of \\ Diffusion Models}
\author{Shehtab Zaman\thanks{Equal Contribution. Corresponding author: szaman5@binghamton.edu}, Chengyan Liu$^*$, \& Kenneth Chiu  \\
School of Computing\\
State University of New York at Binghamton\\
Binghamton, NY 13902, USA \\
\texttt{\{szaman5,cliu118,kchiu\}@binghamton.edu} \\
}
\begin{document}

\maketitle

\begin{abstract}
  Idempotent generative networks (IGNs) are a new line of generative models based on idempotent mapping to a target manifold. IGNs support both single-and multi-step generation, allowing for a flexible trade-off between computational cost and sample quality.  But similar to Generative Adversarial Networks (GANs), conventional IGNs require adversarial training and are prone to training instabilities and mode collapse.  Diffusion and score-based models are popular approaches to generative modeling that iteratively transport samples from one distribution, usually a Gaussian, to a target data distribution. These models have gained popularity due to their stable training dynamics and high-fidelity generation quality. However, this stability and quality come at the cost of high computational cost, as the data must be transported incrementally along the entire trajectory. New sampling methods, model distillation, and consistency models have been developed to reduce the sampling cost and even perform one-shot sampling from diffusion models. In this work, we unite diffusion and IGNs by distilling idempotent models from diffusion model scores, called SIGN. Our proposed method is highly stable and does not require adversarial losses. We provide a theoretical analysis of our proposed score-based training methods and empirically show that IGNs can be effectively distilled from a pre-trained diffusion model, enabling faster inference than iterative score-based models. SIGNs can perform multi-step sampling, allowing users to trade off quality for efficiency. These models operate directly on the source domain; they can project corrupted or alternate distributions back onto the target manifold, enabling zero-shot editing of inputs. We validate our models on multiple image datasets, achieving state-of-the-art results for idempotent models on the CIFAR and CelebA datasets.
\end{abstract}

\section{Introduction}

Generative modeling for high-dimensional data like images and video faces a fundamental trilemma~\cite{diffusion_gan}: balancing (a) high sample quality, (b) fast sampling, and (c) diverse mode coverage~\cite{mode-coverage-1, mode-coverage-2}. This challenge has driven the development of numerous deep generative methods, each navigating these trade-offs differently. Prominent examples include generative adversarial networks (GANs)~\cite{GAN}, variational autoencoders (VAEs)~\cite{VAE}, auto-regressive models~\cite{van2016pixel}, normalizing flows~\cite{normalizingflow1, normalizingflow2, ho2019flow++}, flow-matching, and diffusion models~\cite{sohl2015deep, ddpm, nichol2021improved}. 

GANs generate high-quality samples quickly but suffer from training instabilities and reduced mode coverage due to their adversarial objective, which can lead to mode collapse~\cite{gan_diff, gan_convergence, jo2020investigating}.
Also, capable of single-step sampling, normalizing flows and VAEs often produce lower-quality samples~\cite{ho2022cascaded}.

Idempotent Generative Models (IGNs)~\cite{shocher2023idempotent} are the newest entry in the zoo of generative models. They are a novel class of GAN-like models that can combine the benefits of both diffusion models and GANs. They support single-step sampling and iterative refinement, offering a flexible trade-off between computational cost and sample quality. 
Like GANs, IGNs suffer from training instabilities stemming from their adversarial objective. Modern generative model training requires a large amount of data and computing resources. Models with unstable training, where the model can abruptly diverge, are prohibitively costly to train. 
Improving the training stability of IGNs is crucial for effective design exploration and for enhancing model performance.

Diffusion and score-matching models have become the de facto generative models. They achieve high sample quality and are significantly easier to optimize. 
This training stability has enabled researchers to rigorously optimize these models, leading to architectural innovations and state-of-the-art performance in domains such as image, video~\cite{video-diff-1, video-diff-2}, audio generation~\cite{audio-diff}, molecular synthesis~\cite{mol-diff-1, mol-diff-2, mol-diff-3}, and protein structure prediction~\cite{protein-diff}. 
However, this performance comes at the cost of slow, iterative sampling, as these models require multiple steps to transform a sample from a simple prior distribution to the complex data distribution. 
To address this limitation, many recent works have focused on accelerating sampling through methods like distillation, without sacrificing model quality or stability~\cite{ImprovedConsistency, diffusion_gan, ADD}.

\begin{figure}[h]
    \begin{center}
    \includegraphics[width=0.95\linewidth]{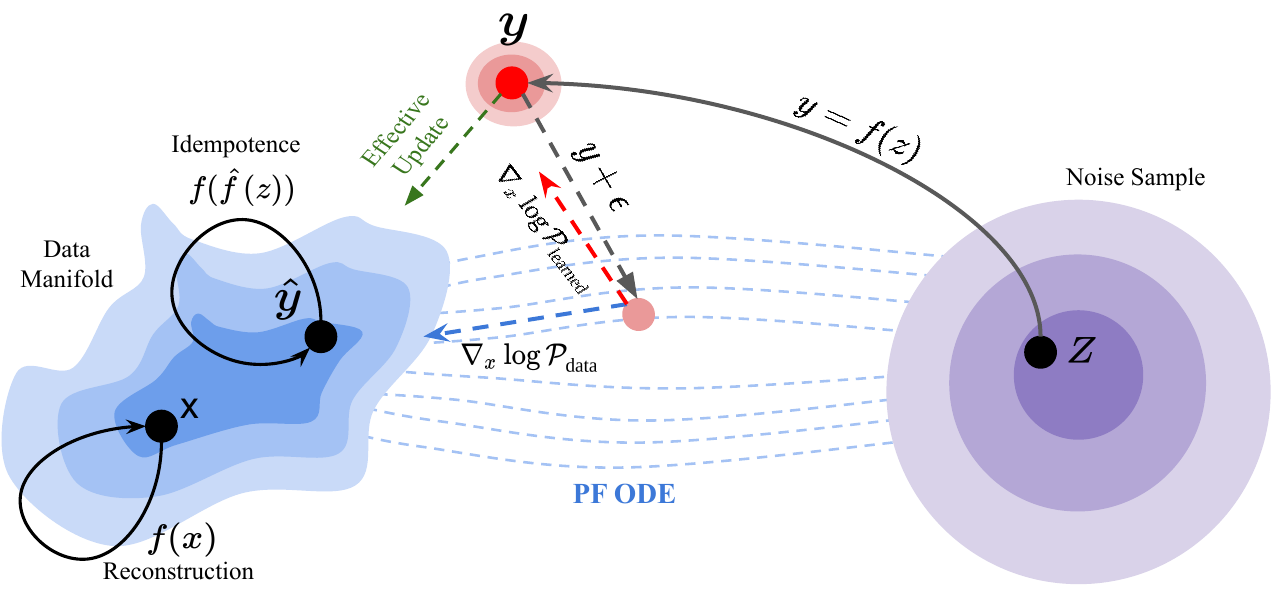}
    \caption{The goal of the optimal idempotent network, $\hat{f}$, is to project inputs outside of the target manifold on the left, $z\sim Z$, onto that manifold. While imposing dataset reconstruction $\left(\hat{f}(x)=x\right)$ and idempotence $\left(f\left(\hat{f}(z)\right)=\hat{f}(z)\right)$, our method uses estimation of the real score, $\nabla_x\log\mathcal{P}_{\text{real}}$. The projected output $y = f(Z)$ is on the data manifold when our model learned score function, $\nabla_x\log_{\text{learned}}$ is equal to  $\nabla_x\log\mathcal{P}_{\text{real}}$. We design our training algorithms to estimate the real score function and train the model.}
    \label{fig:projections}
    \end{center}
\end{figure}



To enable exploration of IGNs on diverse domains and large-scale high-resolution datasets, we must improve the training characteristics of the optimization methods. 
Inspired by the success of diffusion models, we propose an optimization algorithm to stabilize IGN training. Drawing from consistency models~\cite{song2023consistency}, which learn to map points along a probability flow trajectory, we introduce Score-based Idempotent Generative Networks (SIGNs). 
SIGNs are trained to map noisy samples back onto the data manifold. 
They can be viewed as implicit time consistency models that support arbitrary noise schedules. 
We reformulate the IGN objective as a projection problem: noisy samples, which lie far from the data manifold (Fig.~\ref{fig:projections}), are projected back onto it by the model, which remains idempotent for samples already on the manifold. 
This connection to score models enables the transfer of architectures, training techniques, and pre-trained weights to IGNs, while their single-step generation capabilities significantly improve sampling speed.
SIGNs can be trained independently or by distilling a pre-trained diffusion model. They are capable of single-step and multi-step sampling and zero-shot editing. We establish a connection between diffusion models (or, equivalently, score-based models) and IGNs and propose an alternative objective for training IGNs efficiently. 

\paragraph{Contributions} In our current work, we present: \textbf{(a)} a new, stable objective combining score-matching and tightening losses to replace the unbounded, unstable tightening loss in IGNs; \textbf{(b)} a theoretical analysis of our proposed objective highlighting; and \textbf{(c)} empirical validation of the new objective show our models have state of the art generation results for idempotent models and strong zero-shot editing capabilities. We achieve more than a 41\% reduction in FID compared to the SOTA IGN model.

%

The manuscript is organized as follows: first, we describe the probability flow ordinary differential equation and idempotent models that underpin our study.
We then introduce our novel learning objectives and provide theoretical insights into them. We then contextualize our contributions with a review of related work. Finally, we provide empirical experiments showcasing state-of-the-art performance for IGN models and discuss future research directions. 

\section{Background}

Our work focuses on establishing a connection between diffusion or score-based models and IGNs to improve IGN training stability. 
We achieve this by training on samples along the probability flow differential equation (PF-ODE)~\cite{song2020score}  defined by score-based models learning to project onto an idempotent data manifold. 

\textbf{Notation}. We denote the unknown, true data distribution with $\dataProb$, and the corresponding score of the probability density is defined as $\nabla_{\x}\log \dataProb$. $\x \sim \dataProb$ are objects sampled from the data distribution. 
The score function, $s_\phi(\cdot)$, is a parameterized function trained to approximate $\nabla_{\x}\log \dataProb$.
$O(\x, t) = \x \circledast \mathcal{N}(0, \sigma(t) \mathbf{I})$ is the noising operator, where $\sigma(t)$ is the noise schedule defined as a monotonically increasing function of time $t$ and $\circledast$ denotes the convolution of the two distributions.
$\dataProb^{\sigma(t)}(\x_t)$ denotes the noising operator perturbed data distribution at time $t$ .
Our models are trained to learn $\mathcal{P}_{\text{model}}$, under the condition that it is sufficiently similar to $\dataProb$. We use $\mathcal{U}[[1,N]]$ as the uniform distribution over the set $\{1, 2, \ldots, N\}$. The sequence of time is discretized with the set $\{t_i\}^{N}_{i=0}$, where $t_0 = \epsilon$ and $t_N = T$.


%

\subsection{Probability Flows ODE}
\label{sec:pfode}
 
Following the notations from \cite{ParticleModel}, denoising Score-based Diffusion models are represented by the following Stochastic Differential Equation (SDE):
\begin{equation*}
    dx_{t} = 2 \sigma(t) \dot{\sigma}(t) \nabla\log \dataProb^{\sigma(t)}(\x_t)  + \sqrt{2 \sigma(t) \dot{\sigma}(t)} dW_{t},
\end{equation*}
where $\dataProb^{\sigma(t)}(\x_t)$ denotes the Gaussian perturbed data distribution,  $\sigma(t)$ is the noise schedule function that defines the noise level at time $t$, and the dot denotes a time derivative, and $W_{t}$ denotes the Wiener Process. 
An important property of this SDE is that there exists a corresponding Ordinary Differential Equation, named Probability Flow ODE (PF-ODE), whose solution trajectory has the same marginal probability distribution as the SDE. This admits a PF-ODE: 
\begin{equation}
 \label{eq:PF-ODE}
 \frac{d\x}{dt} = -\sigma(t) \dot{\sigma}(t) \nabla_\x \log \dataProb^{\sigma(t)}(\x_t).   
\end{equation}
Eq.~\ref{eq:PF-ODE} enables evolving a sample from $x_{\sigma(t_a)} \sim \dataProb^{\sigma(t_a)}$ to a sample $x_{\sigma(t_b)} \sim \dataProb^{\sigma(t_b)}$(or equivalently noise scales $\sigma(t_a) \rightarrow \sigma(t_b)$).  
The goal of score-based generative methods is to flow samples from an easily sampleable distribution (like a Gaussian) to the true data distribution. Generally, the probability flows are constrained such that samples are reversed from time T, where $\mathcal{P}^{\sigma(T)} \approx \mathcal{N}(0, \sigma(T) \mathbf{I})$ to time $\epsilon$ where $\mathcal{P}^{\sigma(\epsilon)} \approx \dataProb$.  
Flow-matching models~\cite{flow-match-1} further generalize the mechanism of transporting samples from one distribution to another based on ODE transport.

 In practice, we don't know the true $\dataProb^{\sigma(t)}$, and thus the function, $s_\phi(\x_t, \sigma(t))$, parameterized by learnable weights $\phi$, is trained to approximate $\nabla_\x \log \dataProb^{\sigma(t)}$ to obtain the empirical PF-ODE. Several ODE solving techniques \cite{karras2022elucidating, lu2022dpm, ode-solver-3, ode-solver-4} have been proposed for the empirical PF-ODE. 
For example, using Euler's first-order method, the empirical PF-ODE updates can be written as,
\begin{equation}
    \label{Eq:Euler-ODE}
    \frac{\x_{t_n} - \x_{t_{n+1}}}{t_n - t_{n+1}}
     =  - \sigma(t_{n+1}) \dot{\sigma}(t_{n+1}) s_\phi(\x_{t_{n+1}}, \sigma(t_{n+1})),
\end{equation}
where the time horizon is partitioned into $ N-1$ sub-intervals, and each time step is indexed by n. More details of the discretization step can be found in \cite{karras2022elucidating}. New samples are generated by iterating randomly sampled inputs with a score model $s_\phi$ and an ODE solver. We can see that the trained model provides us with a strong proxy for the real score function, $ \nabla_\x \log \dataProb^{\sigma(t)}(\x_t)$.

\subsection{Idempotent Generative Networks}
Idempotent generative networks (IGNs) \cite{shocher2023idempotent} are generative models based on the property of idempotence.
An idempotent mapping, $f$, is an operator in some space $\mathcal{X}$ such that for some $x \in \mathcal{X}$, we have,
$f(f(x)) = f(x)$. 
Identity and absolute value are canonical examples of idempotent operators. 
IGN learn a constrained idempotent mapping that is idempotent for elements of some target data manifold (e.g., real images), while projecting all other inputs to the manifold. 
The trained model can be used for generation by sampling from a distribution such as Gaussian noise and using the model to project the random sample to the learned data manifold.
Specifically, the IGN optimization objectives rely on three main principles: \textbf{(a)} 
the identity boundary condition on the data manifold described above,  \textbf{(b)}  idempotence, and \textbf{(c)} the size of the data manifold is minimal.
These objectives can be optimized by their respective loss functions.  

\textbf{Reconstruction Loss}. For a sample $x \sim \dataProb$, given a distance metric function $D(\cdot, \cdot)$, the reconstruction loss is:
\begin{equation}
    \label{eq:loss-recon}
    \mathcal{L}_{\text{recon}} = \E_{\x \sim \dataProb}{[\delta_\theta(\x)]} = \E_{\x \sim \dataProb}[D(\x, f_\theta(\x))].
\end{equation}
This imposes the boundary condition of preserving in-distribution samples on the data manifold. 


\textbf{Idempotent Loss}.
For any input from the domain distribution $ z \sim \mathcal{Z}$, similarly with a distance metric function $D(\cdot, \cdot)$, the idempotent loss is:
\vskip -0.2in
\begin{equation}
    \label{eq:loss-idem}
    \mathcal{L}_{\text{idem}} = \E_{\z \sim \mathcal{P}_\z}[\delta_{\theta'}(f_\theta(\z))] = \E_{\z \sim \mathcal{P}_\z}[D(f_\theta(\z), f_\theta'(f_\theta(\z)))],
\end{equation}
where $f_\theta'$ is the frozen copy of the model $f_\theta$. 
$\z$ is restricted to be from some easily sampleable distribution such as $\normalDist$.
This loss is minimized when for any sample, $z$, the model is idempotent: $f_\theta(f_\theta(z)) = f_\theta(z)$.

\textbf{Tightness Loss}.
For any input from the domain distribution $ z \sim \mathcal{Z}$, the tightness loss is:
\begin{equation}
    \label{eq:loss-tight}
    \mathcal{L}_{\text{tight}} = 
    \E_{\z \sim \mathcal{P}_\z}[\delta_\theta(f_{\theta'}(\z))] = 
    \E_{\z \sim \mathcal{P}_\z}[-D(f_\theta(f_\theta'(\z)), f_\theta'(\z))].
\end{equation}
As above, $f_\theta'$ is the same as $f_\theta$. 
The IGN training algorithm and gradient equations are reproduced in Appendix~\ref{appendix:ign-training} for clarity and completeness. The IGN loss function is, therefore, 
\begin{equation}
    \label{eq:loss-ign}
    \Loss_{\text{IGN}} = \Loss_{\text{recon}} + \Loss_{\text{idem}} + \lambda_t 
    \Loss_{\text{tight}},
\end{equation}
where $\lambda_t$ is the weight of the tightening loss term. The authors set $\lambda_t < 1$ to stabilize training. 

The training of IGN requires all terms of Eq. \ref{eq:loss-ign} to be minimized simultaneously. This empirically leads to training instabilities. For one, the $\mathcal{L}_{tight}$ objective is the opposite of the $\mathcal{L}_{idem}$ objective, introducing adversarial optimization and making training more difficult. Furthermore, the minimum of $\mathcal{L}_{tight}$, without reweighting, is unbounded, which can lead to arbitrarily large gradient updates.  
Even with the reweighted tightening loss in \cite{shocher2023idempotent} can produce large gradient updates when both $\Loss_{\text{tight}}$ and $\Loss_{\text{rec}}$ are large positive numbers, and lead to unstable training dynamics. 
\cite{shocher2023idempotent} note IGNs have similar training characteristics to GANs, which are well known to suffer from training instability and mode collapse \cite{gan_convergence, gan_diff, arjovsky2017towards, saxena2021generative}. 

\section{Score-based Idempotent Model}
Score-based Idempotent Generative (SIGN) models distill pre-trained score models to improve IGN training dynamics and enable fast sampling. As previously mentioned, the adversarial tightening loss is a key cause of poor training dynamics on IGNs. Crucially, we note that the IGN objective can be achieved using the score function learned by diffusion models. Specifically, for a Diffusion or flow-matching model, we have a solution trajectory  $\{\x_t\}_{t\in[\epsilon,T]}$  based on the corresponding PF-ODE as in Eq.\ref{eq:PF-ODE}. On this trajectory, only the point $x_{\epsilon}$ is on the data distribution $\dataProb$, while the rest of the points on the trajectory are off the manifold and naturally form a constraint on the size of the data manifold. 

 The main contribution of our work is the \textbf{score-based idempotent loss}, $\mathcal{L}_{\text{SIGN}}$, for training idempotent generative models. We replace the unbounded tightening loss with a \textbf{distribution matching loss}, $\mathcal{L}_{\text{dmd}}$, or a \textbf{consistency flow loss}, $\mathcal{L}_{\text{flow}}$, to learn the data manifold from a score-function estimate for an idempotent generative model.

\textbf{Distribution Matching Loss}.
The goal of a generative model is to ultimately sample from the data distribution by sufficiently emulating some target distribution. A diffusion or flow-based learns a target distribution $\mathcal{P}_{\text{target}}$ that is sufficiently close to $\mathcal{P}_{\text{data}}$. While we do not have access to either distribution, we have access to a learned approximation of $\nabla_{\x_t} \log \dataProb^{\sigma(t)}(\x_t)$ through the trained diffusion model.  By matching the scores of our idempotent model over a family of noisy distributions with the scores of our pre-trained model, 

Following ~\cite{dmd}, we also estimate the probability densities by doing gradient updates using approximate scores. We directly 

\begin{equation}
    \label{eq:loss-dmd}
    \nabla_\theta \mathcal{L}_{\text{DMD}} = \underset{\substack{\z \sim \mathcal{N}(0,\mathbf{I}), \\
    n \sim \mathcal{U}[[1, N]] }}{\E}\left( s_{\text{learned}}(y_n, n) -s_{\text{diffusion}}(y_n, n)\right) \frac{df_\theta}{d\theta}
\end{equation}

where, $\mathbf{y}_n = O(f_\theta(\z),t_n. )$ $O(\x, t)$ is the noising operator, such that, $O(\x, t) = \x \circledast \mathcal{N}(0, \sigma(t) \mathbf{I})$, which performs t steps forward diffusion on the given input \textbf{x}. We use a pre-trained diffusion model for $s_{\text{diffusion}}(, )$. An auxiliary diffusion model is trained along with the SIGN model to provide the learned score estimates, $s_{\text{learned}}(,)$.

\textbf{Consistency Flow Loss}.
$\Loss_{\text{DMD}}$ requires an additional model tracking the scores of the current model during training, incurring a large computation cost. 
 As an alternative, we take inspiration from consistency models to minimize the size of the manifold through the probability flow ODE and propose the flow loss.
  Based on a diffusion-based ODE solver to impose restrictions on the learned manifold, the flow-based loss is given by:

  
 
\begin{equation}
    \label{eq:loss-flow}
    \Loss_{\text{Flow}} = \E_{\x \sim \dataProb, n \sim \mathcal{U}[[1, N]]} [D(f_{\theta}(\x_{t_n}), f_{\theta'}(\x_{t_{s}}))]
\end{equation}
where, $\x_{t_{n}} =O(\x, t_{n})$ and $\x_{t_{s}}$ is obtained by taking a step following the empirical PF-ODE in Eq.~\ref{Eq:Euler-ODE} using a learned score function, like a trained diffusion model, or an empirical score-estimator. 
The real score function, $\nabla_x\log\dataProb$, defines a vector field over the data space $x$. By taking the expectation over $\x$ and noises $t_n$, we obtain the average direction of motion at any $\x$, which is used as the score function estimate. Given constraints, such as linearity in the trajectory, as shown in Theorem~\ref{thm:2}, this allows us to learn the target manifold. 
Intuitively, this loss can be seen as treating points along the trajectory as off-manifold domains, and we require them to be mapped in one step onto the same point of the data manifold. 
We use the training techniques from Consistency Models, such as using pre-trained diffusion models, or an unbiased estimator for the ODE-solvee to approximate the score function.

\textbf{Improving Training Dynamic} While the aforementioned objectives are sufficient in theory, we introduce further improvements to speed up convergence and improve generation quality. We use 2 auxiliary loss terms \textbf{(a)} regression loss and \textbf{(b)} denoising loss for this purpose. ~\cite{dmd} show that the regression loss of supervised learning on generated pairs from a pre-trained model significantly improves model quality. Similarly, the denoising loss connects the training objective to a test-time scenario where noisy samples may be input to the model to iteratively improve generated outputs.

\textbf{Score-based Idempotent Loss}.
Using the consistency loss over the modified distributions obtained by adding Gaussian noise to the data distribution, we propose the consistency-based IGN loss as:
\begin{equation}
    \label{eq:cign-loss}
    \Loss_{\text{SIGN}} = \Loss_{\text{recon}} + \Loss_{\text{idem}} + \lambda_f\Loss_{\text{flow}} +
    \lambda_d\Loss_{\text{dmd}} +
    \lambda_r\Loss_{\text{reg}} +
    \lambda_n\Loss_{\text{denoise}}
\end{equation}
where, $\lambda_f, \lambda_d, \lambda_r, \lambda_n$ are hyperparameters for each auxiliary loss terms. In practice, we set $\lambda_f, \lambda_d, \lambda_r, \lambda_n \in [0, 1]$, to optionally enable various loss terms. Depending on the training environment and dataset complexity, a subset of the loss terms is used. The coefficients are decided heuristically so that all terms have the same magnitude at the start of training. 

 We replace the unbounded tightening loss in Eq. \ref{eq:loss-tight} with a combination of distribution matching and flow-based losses in Eq.\ref{eq:cign-loss}.
 The tightening loss in the original IGN aims to restrict the manifold to only include the data distribution, but causes training instability. We directly restrict the learned data manifold toward the data distribution. By replacing the unbounded loss and directly restricting the learned data manifold, we ensure all components of our objective are bounded for stable training.
 In the following section, we show in Theorem~\ref{thm:0} and Theorem~\ref{thm:1}, the distribution matching and flow-based losses accomplish the same objective.
 The training pseudo-code for the complete training loss and all constituent losses is in Appendix~\ref{appnx:training}.

\textbf{Sampling} SIGN aims to excel at single-step generation, which is its primary mode of operation. However, similar to Consistency models, IGN, and Diffusion models, SIGN also provides the ability to perform multi-step sampling to trade off computational cost for generation quality. Generated outputs can be iteratively refined by computing multiple forward passes on the data. The recursive sampling algorithm is presented in Alg. \ref{alg:cign-rec-sample}.

~\cite{karras2022elucidating} show that additional noise injected during the sampling process improves the quality of the generated images. 
As the signal-to-noise ratio of the initial sample is 0, the additional noise during sampling allows the model to correct imperfections during the early stages of generation. 
In algorithm \ref{alg:cign-con-sample}, we provide an additional sampling method that effectively "pushes" the sample out of the manifold. The additional noise injection cancels out the error introduced when sampling from a low SNR region. Different use cases may be beneficial for different sampling approaches. With inputs with a high signal-to-noise ratio, such as the case for partially corrupted or low-resolution images, the recursive sampling approach may be more favorable. On the other hand, unconditional generation may benefit from backtracking and adding additional noise in the sampling procedure in Algorithm \ref{alg:cign-con-sample}.


\section{Theoretical Analysis}
\label{section:theoretical_anaysis}
In this section, we provide convergence guarantees and error bounds and build on our understanding of the proposed SIGN model. 

\begin{theorem}
\label{thm:0}
Given a trained SIGN model, $f_\theta$, such that it is a measurable idempotent map, $f_\theta: \mathbb{R}^d \rightarrow \mathbb{R}^d$. Let $\dataProb$ be the true data distribution and $\mathcal{P}_{f_\theta} := f_\theta\#\dataProb$ its pushforward through $f_\theta$. Given the regular and k-dimensional connected, $\mathcal{C}^2$ manifolds $D$ and $M$, we have $\supp \dataProb = D$ and $\supp \mathcal{P}_f = M$, and the score functions are defined on $\forall x \in D \cap M$, if $\theta$ is the global minimum $\mathcal{L}_{\text{SIGN}}$, then $D=M$. 
\end{theorem}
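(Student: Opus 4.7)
The plan is to show that at a global minimum every nonnegative component of $\mathcal{L}_{\text{SIGN}}$ vanishes, and then to extract $D=M$ almost entirely from the reconstruction term via a pushforward-measure argument. The geometric hypotheses on $D$ and $M$ and the score assumption are used only to ensure the constituent losses are well-posed; the measure-theoretic step itself is independent of them.

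First I would verify that $\Loss_{\text{recon}}$, $\Loss_{\text{idem}}$, $\Loss_{\text{flow}}$, $\Loss_{\text{reg}}$, and $\Loss_{\text{denoise}}$ are each expectations of the nonnegative distance $D(\cdot,\cdot)$ and hence nonnegative, while $\Loss_{\text{dmd}}$, being the KL-type objective of \cite{dmd} whose gradient has the form displayed in Eq.~\ref{eq:loss-dmd}, is likewise nonnegative. Next I would exhibit an idealized map $f^\star$ that simultaneously zeros every term: set $f^\star|_D = \mathrm{id}_D$ and, on $\mathbb{R}^d \setminus D$, define $f^\star$ as a measurable idempotent projection onto $D$ whose restriction to $\normalDist$ pushes forward to $\dataProb$ (existence by a Borel-isomorphism / measurable-selection argument between standard Borel spaces). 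For this $f^\star$, the identity property on $D$ kills the reconstruction and denoising terms, idempotence kills the idempotent and regression terms, collapsing any PF-ODE trajectory to its endpoint in $D$ kills the flow loss, and the generator property $f^\star\#\normalDist = \dataProb$ kills the DMD gradient. Hence $\inf_\theta \mathcal{L}_{\text{SIGN}} = 0$, and any global minimizer $\theta$ satisfies $\Loss_{\text{recon}}(f_\theta)=0$ in particular.

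From $\E_{\x \sim \dataProb}[D(\x, f_\theta(\x))]=0$ together with $D(\cdot,\cdot)$ being a proper distance, I deduce $f_\theta(\x) = \x$ for $\dataProb$-almost every $\x$. This a.e. identity promotes to equality of pushforward measures: for every Borel set $A$,
\[
 (f_\theta \# \dataProb)(A) \;=\; \dataProb(f_\theta^{-1}(A)) \;=\; \dataProb(A),
\]
because $\{\x : f_\theta(\x)=\x\}$ has full $\dataProb$-measure. Thus $\mathcal{P}_{f_\theta} = \dataProb$ as Borel probability measures on $\mathbb{R}^d$, and taking supports gives $M = \supp \mathcal{P}_{f_\theta} = \supp \dataProb = D$.

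The main obstacle is the construction of the idealized minimizer $f^\star$ on the Gaussian side, namely a measurable map from $\mathbb{R}^d \setminus D$ into $D$ that realizes $\dataProb$ as the pushforward of Gaussian noise while remaining idempotent and PF-ODE-consistent. This is precisely where the regularity of $D$ as a connected $C^2$ $k$-manifold, the score-definedness on $D\cap M$, and the absolute continuity of $\dataProb$ with respect to Hausdorff measure on $D$ enter: they let one invoke standard measurable-isomorphism results and ensure the flow and DMD terms are well-defined so that $\inf \mathcal{L}_{\text{SIGN}}=0$ is attainable. Once $f^\star$ is in hand, the remainder — zero-of-sum implies zero-of-each, $\Loss_{\text{recon}}=0$ implies $f_\theta=\mathrm{id}$ a.e., a.e. identity implies pushforward identity, and measure identity implies support identity — is routine.
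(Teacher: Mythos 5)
Your argument is sound as a proof of the statement \emph{exactly as written}, but it takes a genuinely different route from the paper's, and the difference is instructive. Because the theorem defines $\mathcal{P}_{f_\theta} := f_\theta\#\dataProb$ (the pushforward of the \emph{data} distribution, not of the noise), your chain --- $\Loss_{\text{recon}}=0$ forces $f_\theta=\mathrm{id}$ $\dataProb$-a.e., hence $f_\theta\#\dataProb=\dataProb$ on the full-measure fixed-point set, hence $M=\supp\mathcal{P}_{f_\theta}=\supp\dataProb=D$ --- does settle the literal claim, and it shows that under this literal reading neither the score hypothesis nor the manifold regularity is doing any work. The paper's proof is aimed at a stronger, intended reading: there $M$ is treated as $\mathrm{Im}(f_\theta)$, the manifold the model projects \emph{all} inputs onto (equivalently, the support of what the model generates from off-manifold samples), and the two inclusions are obtained separately --- $D\subseteq M$ from the score equality forced by minimizing $\mathcal{L}_{\text{DMD}}$ (equal tangential scores of the manifold densities on $D\cap M$, plus normalization, give equal densities, and a neighborhood/contradiction argument yields the inclusion), and $M\subseteq D$ from idempotence of the pushforward. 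Under that reading your argument is insufficient: zero reconstruction loss constrains $f_\theta$ only on $D$ and says nothing about where Gaussian samples or intermediate PF-ODE states are sent, so it cannot bound $\mathrm{Im}(f_\theta)$; controlling that image is precisely the role the DMD/score term plays in the paper's proof. In short, your proof is correct for the statement but bypasses its intended content.

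There is also a gap in your step ``global minimizer $\Rightarrow$ every term vanishes.'' You need the infimum over the \emph{parametric} family $\{f_\theta\}$ to be zero and attained, but your idealized $f^\star$ is only constructed (and only sketched) as a measurable map; nothing places it, or maps with loss arbitrarily close to zero, inside the model class, so in principle a global minimizer of $\Loss_{\text{SIGN}}$ could trade a strictly positive $\Loss_{\text{recon}}$ against other terms, and your deduction $f_\theta=\mathrm{id}$ a.e. would not follow. To be fair, the paper makes an analogous implicit perfect-optimization assumption (it asserts perfect idempotence and exact score equality at the optimum, with the capacity assumption only made explicit in Theorem~\ref{thm:1}), so this does not put you below the paper's own level of rigor; but the clean fix is to state sufficient model capacity (zero loss attainable within $\{f_\theta\}$) as an explicit hypothesis rather than lean on a Borel-isomorphism construction that in any case does not land $f^\star$ in the parameterized family.
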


The proof consists of showing that the support of the learned distribution and data distribution are included in each other the densities are equal on the manifold. The full proof is included in Appendix~\ref{appendix:proofs}. In practice, a pre-trained model or an empirical score estimator is used to obtain the data score function. Score models may not cover the whole manifold, resulting in bias. Intuitively, for points not on the data manifold, the real score function has non-zero gradients, pushing the pushforward distribution towards the real distribution, contracting the manifold. 
Interestingly, ~\cite{kamb2024analytic}, ~\cite{biroli2024dynamical}, and ~\cite{de2022convergence} analytically show that diffusion models recover target distributions on low-dimensional manifolds. Empirically, our method attempts to learn a mapping to this learned manifold.

Next, we look at the convergence conditions for our proposed flow-based SIGN loss.
\begin{theorem}
\label{thm:1}
Denote the distribution learned by the trained SIGN model $f_\theta$ as $\mathcal{P}_\theta$. 
Assuming a large enough model capacity such that:
\begin{equation*}
    \exists \theta^* = \argmin_{\theta} \Loss_{\text{recon}} = 
\argmin_{\theta} \Loss_{\text{flow}} = 0
\end{equation*}
then the learned distribution $\mathcal{P}_\theta = \dataProb$, the true data distribution.

\end{theorem}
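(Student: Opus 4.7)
The plan is to extract a pointwise identity satisfied by $f_{\theta^*}$ from the two vanishing losses, and then invoke the defining property of the PF-ODE: integrated backward from the Gaussian prior at $t_N = T$, it deterministically transports mass onto $\dataProb$ at $t_0 = \epsilon$. Identifying $f_{\theta^*}$ with this backward transport on the support of the prior then delivers $\mathcal{P}_{\theta^*} = \dataProb$.

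First I would unpack $\Loss_{\text{recon}}(\theta^*) = 0$. Since $D$ is a distance, this forces $f_{\theta^*}(\x) = \x$ for $\dataProb$-almost every $\x$, so $f_{\theta^*}$ is the identity on $\supp \dataProb$; this is the boundary condition at time $t_0$. Next I would unpack $\Loss_{\text{flow}}(\theta^*) = 0$. Writing $\x_{t_n} = O(\x, t_n)$ and letting $\x_{t_{n-1}}$ denote the one-step Euler image of $\x_{t_n}$ along the reverse PF-ODE in Eq.~\ref{Eq:Euler-ODE}, vanishing of the expected distance gives
\begin{equation*}
f_{\theta^*}(\x_{t_n}) = f_{\theta^{*\prime}}(\x_{t_{n-1}}) \quad \text{for every } n \in \{1,\ldots,N\}, \text{ almost surely in } \x.
\end{equation*}
Identifying $\theta^{*\prime}$ with $\theta^*$ at the optimum and telescoping through the discretization then yields $f_{\theta^*}(\x_{t_N}) = f_{\theta^*}(\x_{t_{N-1}}) = \cdots = f_{\theta^*}(\x_{t_0}) = \x$, where the last equality uses the reconstruction condition established in the previous step.

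The closing step is a pushforward comparison. Under the idealizations of Section~\ref{sec:pfode}, the marginal of $\x_{t_N}$ under the forward noising construction is $\mathcal{N}(\mathbf{0}, \sigma(T)^2 \mathbf{I})$, and the reverse PF-ODE deterministically transports this prior onto $\dataProb$. The chain above shows that $f_{\theta^*}$ coincides with this reverse transport on prior-almost every input, so $f_{\theta^*}\#\mathcal{N}(\mathbf{0}, \sigma(T)^2\mathbf{I}) = \dataProb$, which is exactly $\mathcal{P}_{\theta^*} = \dataProb$.

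The main obstacle is the telescoping step, which is clean only under three idealizations I would have to state explicitly. First, the Euler update in Eq.~\ref{Eq:Euler-ODE} must be computed with the true score $\nabla_\x\log\dataProb^{\sigma(t)}$ rather than the learned $s_\phi$; otherwise the conclusion is only that $\mathcal{P}_{\theta^*}$ equals whatever distribution the score estimator induces. Second, the frozen parameters $\theta^{*\prime}$ must agree with $\theta^*$ at the optimum, so that both factors inside the flow loss refer to the same map and the chain can be unrolled without accumulating gap terms. Third, the boundary approximations ($t_0 = \epsilon$ rather than $0$, and $\dataProb^{\sigma(T)}$ being exactly $\mathcal{N}(\mathbf{0},\sigma(T)^2\mathbf{I})$) are treated as equalities. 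A quantitative refinement would need to bound how a nonzero residual from each single-step flow constraint propagates across the $N$ chained steps, combined with the standard PF-ODE discretization error.
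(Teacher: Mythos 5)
Your proposal is correct and follows essentially the same route as the paper's proof: the vanishing reconstruction loss supplies the boundary condition $f_{\theta^*}(\x_\epsilon)=\x_\epsilon$ at $t_0=\epsilon$, the vanishing flow loss gives $f_{\theta^*}(\x_{t_{n+1}})=f_{\theta^*}(\x_{t_n})$ along the PF-ODE trajectory, and induction/telescoping yields $f_{\theta^*}(\x_T)=\x_\epsilon$, so the pushforward of the prior is $\dataProb$. The idealizations you flag (exact score in the ODE step, frozen copy equal to $\theta^*$ at the optimum, boundary approximations) are the same caveats the paper acknowledges in the discussion following the theorem rather than inside the proof.
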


We can see from Theorem~\ref{thm:1} that imposing consistency across noise levels of the PF-ODE trajectories can sufficiently tighten the manifold to capture the underlying data distribution. But we must note that the above holds for a case of sufficiently large models, perfect projection at all noise levels, non-overlapping trajectories, as well as an infinitely accurate discretization of the PF-ODE. Unfortunately, in practice, such conditions are not feasible. Particularly, obtaining trajectories by reversing the pre-trained score model is prohibitively expensive. This necessitates adding additional loss terms as described in the previous section to improve training dynamics. The quality of the estimate thus decides the applicability of the learned SIGN. In the next theorem, we show that if the learned score function can generate samples of empirical PF-ODE trajectories with errors uniformly bounded by some noise-related quantity, we can guarantee that the error of the learned SIGN to the optimal idempotent function is bounded. 

\begin{theorem}
\label{thm:2}
    Let $\Delta = \max{|\sigma(t_{n+1})-\sigma(t_n)|}$ for $n \in \{0, N-1\}$ and $f$ be the optimal idempotent function. For some learned model $f_\theta$ which satisfies the $L$-Lipschitz condition. Denote $\{\x_t\}_{t\in[\epsilon,T]}$ the exact PF-ODE trajectory by updating using Eq.~\ref{eq:PF-ODE}, and  $\{\hat{\x}_t\}_{t\in[\epsilon,T]}$ the empirical results by Eq.~\ref{Eq:Euler-ODE} (i.e., using $\x_{t_{n+1}}$ to solve step $n$ in Eq.2 gives the resulting  $\hat{\x}_{t_{n}}$).
    Assume the local approximation error of updating PF-ODE, $||\hat{\x}_{t_{n}} - \x_{t_n}||_2$, is uniformly bounded by, $\mathcal{O}((\sigma(t_{n+1})-\sigma(t_n))^{p+1})  \, \forall n \in \{1, N-1\}$  with $p \geq 1$, and $L \in \mathbb{R}_{\geq 0}$,
    . If $\Loss_{\text{Flow}}(\theta)$ = 0, and $\Loss_{\text{Recon}}(\theta)$ = 0, then we have, 
    $$ \sup_{\x_{t_n}} || f_\theta(\x_{t_n}) - f(\x_{t_n})||_2= \mathcal{O}((\Delta)^p)$$
\end{theorem}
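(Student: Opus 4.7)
The plan is to propagate the SIGN error along a single PF-ODE trajectory by induction on $n$, anchoring at $t_0$ with $\mathcal{L}_{\text{Recon}}(\theta)=0$ and stepping up to $t_N$ using $\mathcal{L}_{\text{Flow}}(\theta)=0$ together with the local truncation hypothesis and the $L$-Lipschitz property of $f_\theta$. The key observation is that, because $f$ is the optimal idempotent projection onto the data manifold and each exact PF-ODE trajectory terminates at $x_{t_0}\in \supp \dataProb$, the target value $f(x_{t_n})=x_{t_0}$ is constant along the trajectory. The quantity to bound therefore reduces to $a_n := \|f_\theta(x_{t_n}) - x_{t_0}\|_2$.

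For the base case, $\mathcal{L}_{\text{Recon}}(\theta)=0$ forces $f_\theta(x_{t_0})=x_{t_0}$, so $a_0 = 0$. For the inductive step, $\mathcal{L}_{\text{Flow}}(\theta)=0$ combined with $\theta = \theta'$ yields the pointwise consistency identity $f_\theta(x_{t_n}) = f_\theta(\hat{x}_{t_{n-1}})$, where $\hat{x}_{t_{n-1}}$ is the single empirical PF-ODE step from $x_{t_n}$ defined in Eq.~\ref{Eq:Euler-ODE}. A triangle inequality then produces
\begin{equation*}
a_n \leq \|f_\theta(\hat{x}_{t_{n-1}}) - f_\theta(x_{t_{n-1}})\|_2 + \|f_\theta(x_{t_{n-1}}) - x_{t_0}\|_2.
\end{equation*}
The second term is exactly $a_{n-1}$, and the first is controlled by the $L$-Lipschitz property of $f_\theta$ together with the local error hypothesis $\|\hat{x}_{t_{n-1}}-x_{t_{n-1}}\|_2 = \mathcal{O}((\sigma(t_n)-\sigma(t_{n-1}))^{p+1}) \leq \mathcal{O}(\Delta^{p+1})$, yielding the additive recursion $a_n \leq a_{n-1} + L \cdot \mathcal{O}(\Delta^{p+1})$.

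Unrolling from $a_0 = 0$ gives $a_n \leq n L \cdot \mathcal{O}(\Delta^{p+1})$, and because $N \leq (T-\epsilon)/\Delta$ so that $n \leq N = \mathcal{O}(1/\Delta)$, we conclude $a_n = \mathcal{O}(L \Delta^{p})$ for every $n$. Taking the supremum over $n$ (and over trajectories) produces the claimed bound. The main obstacle will be the careful bookkeeping around the flow loss: one must justify that $\mathcal{L}_{\text{Flow}}(\theta)=0$ on the training distribution (noised data marginals) implies the pointwise consistency identity on the exact PF-ODE trajectory appearing in the statement, and one must confirm that the Lipschitz factor only multiplies the single-step perturbation so that the accumulated error is additive in $n$ rather than exponentially amplified as $L^n$. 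A secondary subtlety is verifying that the ``optimal idempotent'' $f$ indeed maps every point on a given exact trajectory to the common endpoint $x_{t_0}$ rather than to some other manifold point, which is what makes $f(x_{t_n})$ collapse to a single value and allows the simple telescoping above.
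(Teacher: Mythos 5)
Your overall strategy is the same as the paper's: anchor at $t_0=\epsilon$ using $\Loss_{\text{Recon}}=0$, use $\Loss_{\text{Flow}}=0$ to get the pointwise consistency identity $f_\theta(\x_{t_{n+1}})=f_\theta(\hat{\x}_{t_n})$, control the single-step perturbation via the $L$-Lipschitz property and the local truncation hypothesis, and accumulate additively (your worry about $L^n$ amplification is unfounded for exactly the reason you state: the Lipschitz constant only multiplies the one-step discrepancy $\|\hat{\x}_{t_{n-1}}-\x_{t_{n-1}}\|_2$, so the recursion is $a_n \le a_{n-1} + L\,\mathcal{O}(\cdot)$, which is what the paper writes as $\|\e_{n+1}\|_2 \le \|\e_n\|_2 + L\|\hat{\x}_{t_n}-\x_{t_n}\|_2$). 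Your reduction $f(\x_{t_n})=\x_{t_0}$ is also the same as the paper's step ``$f(\x_{t_{n+1}})=f(\x_{t_n})$ since $f$ is the optimal SIGN solution'' combined with $f(\x_\epsilon)=\x_\epsilon$.

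The one step that fails as written is the final count. You bound each local error by $\mathcal{O}(\Delta^{p+1})$ and then invoke $N \le (T-\epsilon)/\Delta$, i.e.\ $n \le \mathcal{O}(1/\Delta)$. But $\Delta$ is defined as the \emph{maximum} increment $\max_n|\sigma(t_{n+1})-\sigma(t_n)|$, not a minimum step, and it lives in $\sigma$-space, not $t$-space; from the definition one only gets $N\Delta \ge \sigma(T)-\sigma(\epsilon)$, which is the reverse inequality. A highly non-uniform grid can have $N$ arbitrarily large while $\Delta$ stays fixed, so $a_n \le nL\,\mathcal{O}(\Delta^{p+1})$ does not by itself yield $\mathcal{O}(\Delta^p)$ without an additional quasi-uniformity assumption that the theorem does not make. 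The repair is the standard one the paper uses (following the consistency-models argument): keep the per-step increments, bounding $\mathcal{O}\bigl((\sigma(t_{i+1})-\sigma(t_i))^{p+1}\bigr) \le (\sigma(t_{i+1})-\sigma(t_i))\,\mathcal{O}(\Delta^p)$, and then telescope $\sum_i (\sigma(t_{i+1})-\sigma(t_i))$ to a bounded total, giving $\sup_n\|\e_n\|_2 \le C\,\mathcal{O}(\Delta^p)$ for arbitrary partitions. With that substitution your induction coincides with the paper's proof.
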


Similar to \cite{song2023consistency}, we base the proof on the global error bounds for numerical ODE solvers. Due to space limitations, we present the full proofs in Appendix~\ref{appendix:proofs}.
Intuitively, Theorem~\ref {thm:2} shows an important characteristic of SIGNs and ODE trajectories. As the upper-bound error is dependent on the truncation error of the trajectories, paths with high curvature will have high error. Therefore, while the SIGN algorithm works for all diffusion, score, and flow models, algorithms with linear trajectories are better suited. This falls in line with observations in ~\cite{karras2022elucidating, flow-match-1, flow-match-2, improving-flow-match}.

\section{Related Work}
\textbf{IGNs, GANs, and Stability}.
Our work focused on improving the training of IGN, a model class for fast, single-step generation. IGN assumes an underlying data manifold and proposes a transformation $f$ that maps any input source to the manifold. 
In addition, all the data points on the manifold have to be mapped to itself (thus, idempotent) while minimizing the size of the manifold. 
~\cite{jensenenforcing} present a modified backpropagation method to enforce idempotency on generative networks based on perturbation theory for MLP and CNN-based networks, but still suffer from the sensitivity to training dynamics. 
IGNs are similar GANs as both contain adversarial loss terms. IGNs are particularly similar to EB-GAN \cite{EBGAN}, but IGN doesn't require the input source to be a random noise. The instability of GAN-like models is well documented in literature~\citep{mode-collapse-1, mode-coverage-1, mode-coverage-2}, and significant care is required to train large-scale adversarial models.  
Interestingly, \cite{franceschi2023unifying} show a similar connection between GAN models and score models, where they train GANs using pre-trained models. We provide improved performance, additional theoretical guarantees for IGNs using score models, as well as practical training recipes.  

\textbf{Diffusion, Score-based Models, and Distillation}. Diffusion models \cite{sohl2015deep, song2020score} produce high-quality images through a slow and iterative process, which incurs high computational costs. To mitigate this challenge, fast sampling and model distillation methods are of great interest. Consistency Models~\cite{song2023consistency} enable single-step generation by mapping all points on a PF-ODE trajectory to a single output. We show the connection between Consistency Models and IGNs and propose a novel loss to stabilize the training of the IGN models. Furthermore, ~\cite{lipman2022flow, liu2022flow, lee2024improving} flow matching similarly casts generative modeling as a transport mapping problem, where straight trajectories in rectified flows improve sampling efficiency. Idempotent models can similarly be distilled from flow-matching models as well.

\vspace{-.1in}
\section{Results}
\label{section:results}

We train multiple SIGN models on MNIST, CIFAR-10, and CelebA datasets to get empirical validation for our theoretical results. We also use pre-trained models to perform zero-shot editing. The training details are described in Appendix~\ref {appendix:training-details}.

\textbf{Image Generation}. We evaluated our model's image generation capabilities on three standard benchmarks: MNIST~\cite{mnist}, CIFAR-10~\cite{cifar10}, and CelebA~\cite{liu2015faceattributes}. These datasets consist of 28x28 grayscale images, 32x32 color images, and 64x64 color images, respectively.

For MNIST, we used a convolutional U-Net as the model with and without time embeddings for the pre-trained diffusion model and the distilled SIGN model, respectively. We used a $\ell_2$ distance as our distance metric.
We use a pre-trained diffusion model and an unbiased score estimator to train our SIGN models for MNIST. For the simple dataset, we do not use distribution matching, regression, or denoising loss. 
As illustrated in Fig.~\ref{fig:training-approaches-1} and Fig.~\ref{fig:training-approaches-2}, our SIGN models are capable of unconditional generation in a single pass, and image quality is progressively enhanced with multiple passes. 

For CIFAR-10 and CelebA, we adapted model structures from EDM ~\cite{karras2022elucidating} and DDIM ~\cite{song2022denoisingdiffusionimplicitmodels} to make use of their pre-trained weights as a starting point for our training. We show the result of single-step unconditional generation in Fig.~\ref{fig:diffusion-cifar} and Fig.~\ref {fig:diffusion-celeba}. Following Alg.~\ref{alg:cign-con-sample} and~\ref{alg:cign-rec-sample}, we iteratively sample to generate samples. As the samples are more challenging, we use an additional regression loss term to stabilize the training.
To evaluate performance, we first generated 50,000 unconditional, single-step samples from our trained model for each dataset, then we calculated the Fréchet Inception Distance (FID) using 2048 features to measure the similarity of the generated samples to the target dataset. Our model sets a new state-of-the-art benchmark for idempotent models by achieving FID scores of 11.09 on CIFAR-10 and 23.32 on CelebA.
Our CelebA FID score significantly outperforms the original IGN model (FID=39).
Prior work on idempotent models in \cite{shocher2023idempotent} and \cite{jensenenforcing} do not train on CIFAR-10; as such, we are the first to report CFIAR-10 results for idempotent models. 
For our initial ablation study, we trained a model on the CelebA dataset for the same 350 epochs but without our proposed loss. The model achieved an FID score of 123.70, which indicates that our proposed loss significantly improves the training dynamics.

\begin{figure}
\centering
\begin{subfigure}[t]{0.24\textwidth}    \includegraphics[width=\textwidth]{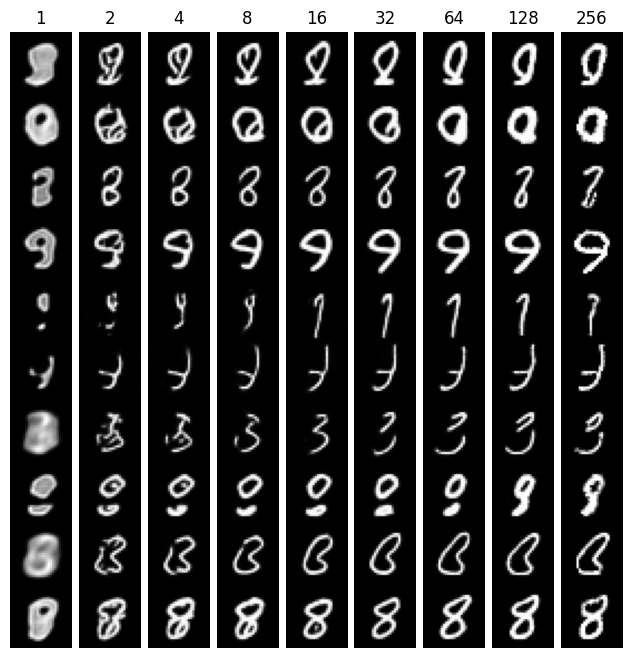}
    \caption{MNIST (Diffusion)}
   \label{fig:training-approaches-1}
  \end{subfigure}
  \hfill
\begin{subfigure}[t]{0.24\textwidth}    \includegraphics[width=\textwidth]{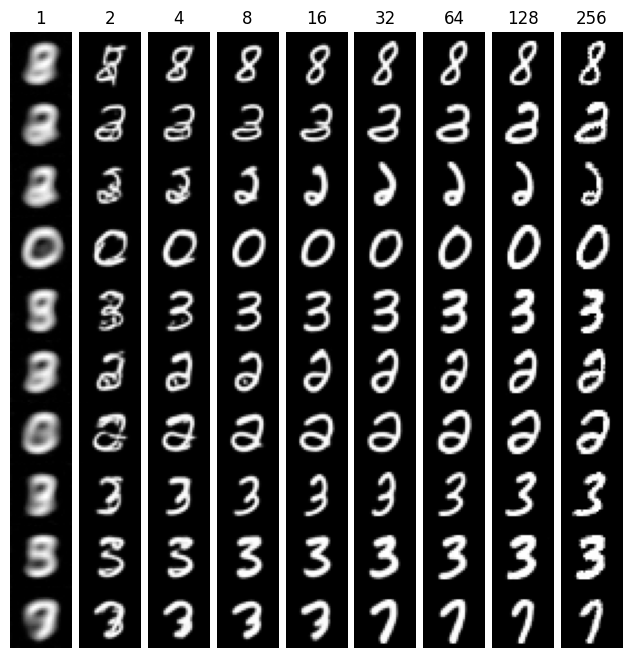}
    \caption{MNIST (Estimator)}
    \label{fig:training-approaches-2}
  \end{subfigure}
  \hfill
  \begin{subfigure}[t]{0.24\textwidth}    \includegraphics[width=\textwidth]{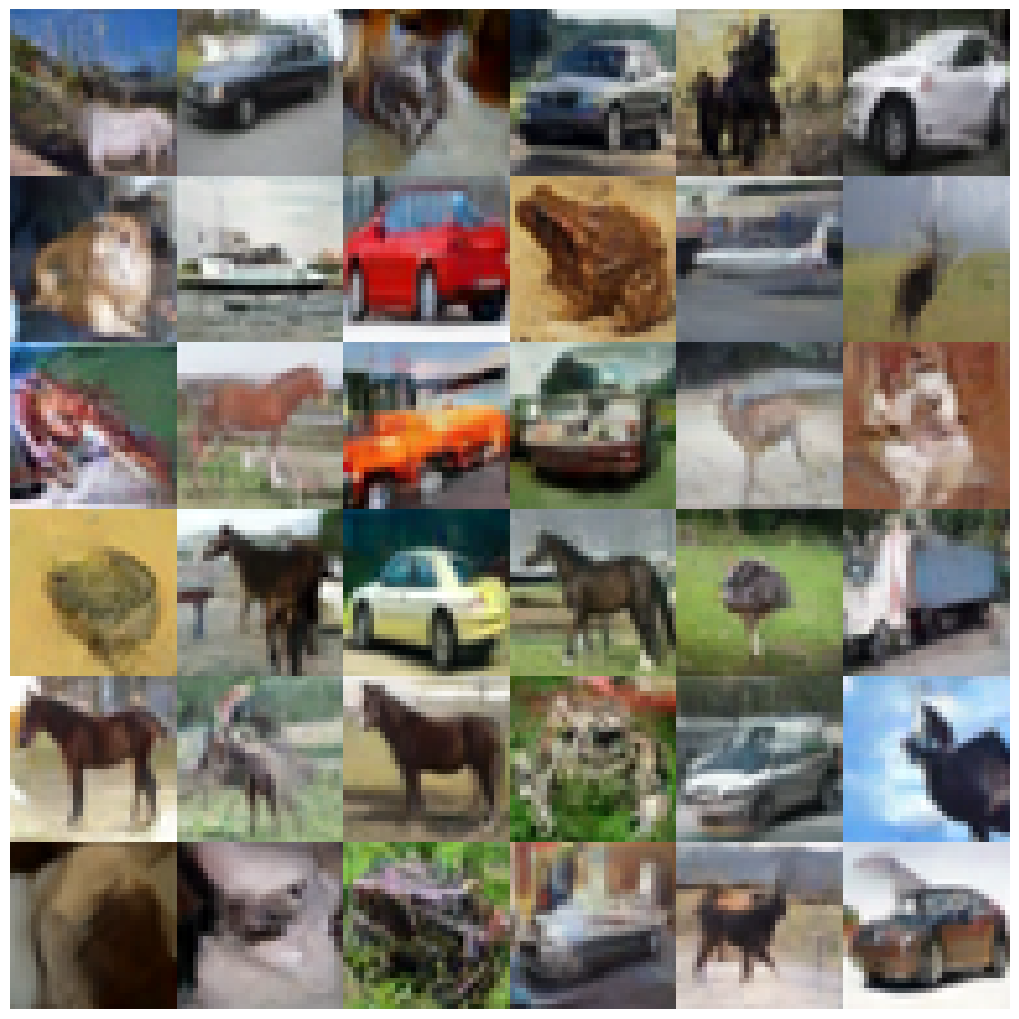}
    \caption{CIFAR-10 (single-step)}
    \label{fig:diffusion-cifar}
  \end{subfigure}
 \hfill
  \begin{subfigure}[t]{0.24\textwidth}
\includegraphics[width=\textwidth]{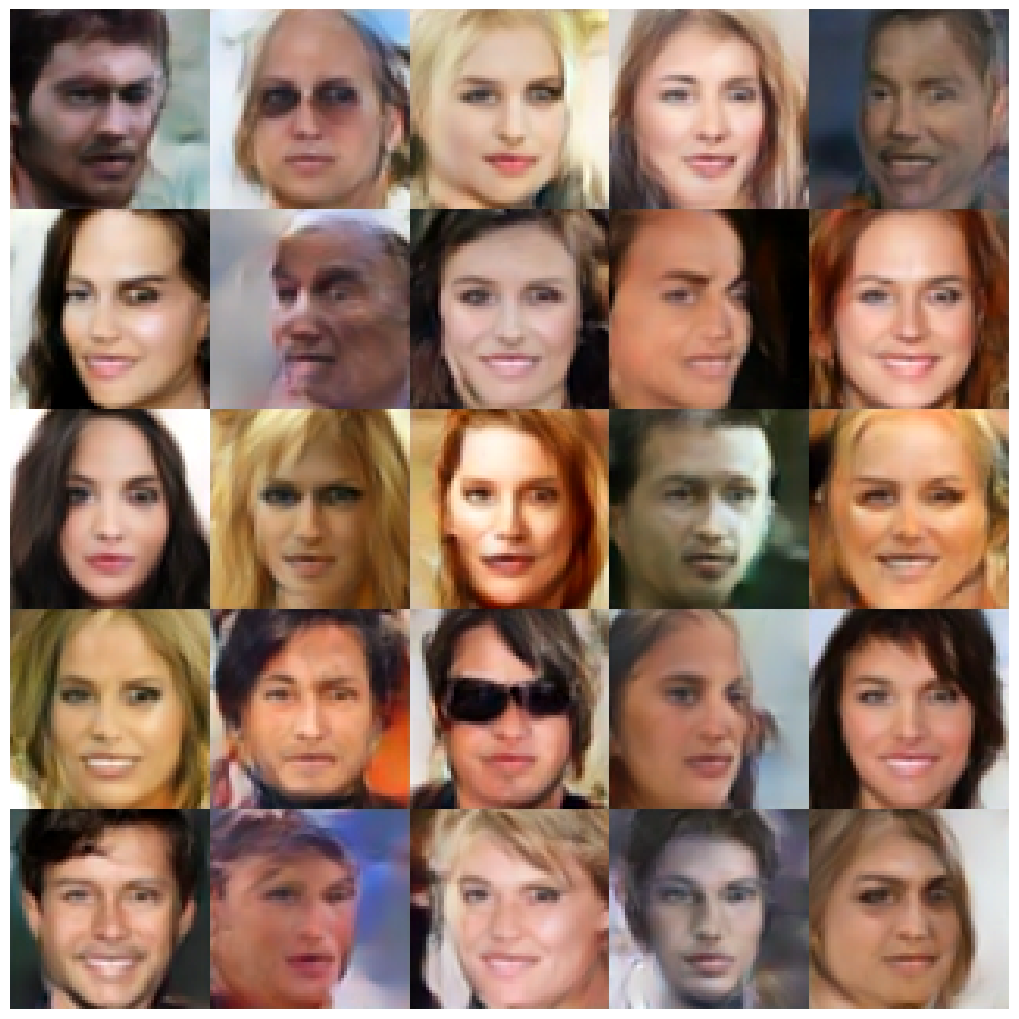}
    \caption{CelebA (single-step)}
    \label{fig:diffusion-celeba}
  \end{subfigure}
  \caption{Results of unconditional sampling from SIGN models. (a) Single and Multi-step generation of a SIGN model trained on MNIST using a diffusion model as the score function. (b) Single and Multi-step generation using an empirical score estimator as the score function. (c) Random sampling of single-step generation from a SIGN model trained on CIFAR-10. (d) Random sampling of single-step generation from a SIGN model on CelebA.}
\end{figure}

\textbf{Zero-shot Editing}. We investigated the zero-shot image editing capabilities of our SIGN models on the CelebA and CIFAR-10 datasets.
As shown in Fig.~\ref{fig:zero-shot-checkerboard} and Fig.~\ref{fig:zero-shot-checkerboard-CIFAR10}, we first applied a checkerboard binary mask to corrupt the data, and tested how the model would perform in single-step and multi-step scenarios. For multi-step sampling, we applied a customized 10-step noise schedule. The model was able to project the corrupted image back towards the target data manifold from single-step sampling, despite not being specifically trained for this task. The multi-step results further improve the image quality, yielding a result closer to the original. We should point out that because CelebA has a higher resolution than CIFAR-10, the defects would be less obvious in Fig.~\ref{fig:zero-shot-checkerboard} than in Fig.~\ref{fig:zero-shot-checkerboard-CIFAR10}. 
\begin{figure}[h!]
    \begin{subfigure}[b]{0.47\textwidth}
\includegraphics[width=\textwidth]{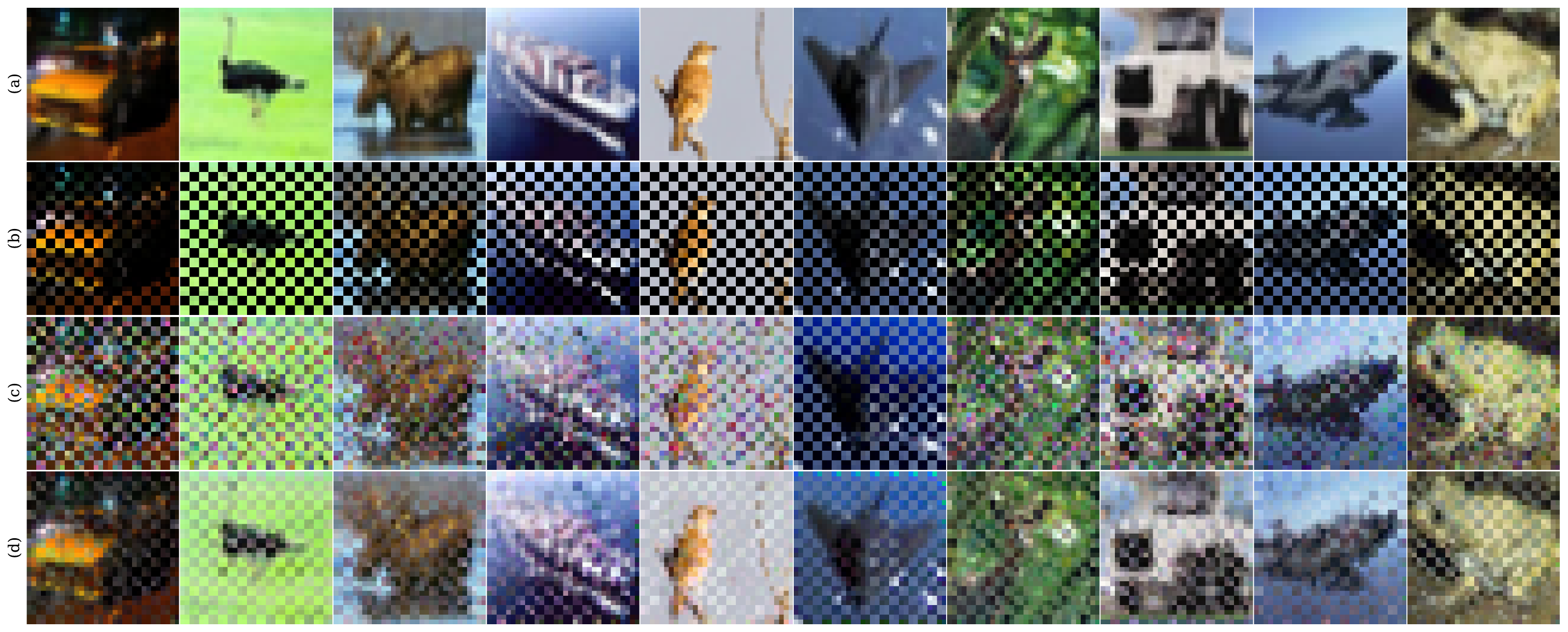}
    \caption{CIFAR-10.}
    \label{fig:zero-shot-checkerboard-CIFAR10}
    \end{subfigure}
    \hfill
    \begin{subfigure}[b]{0.47\textwidth}
\includegraphics[width=\textwidth]{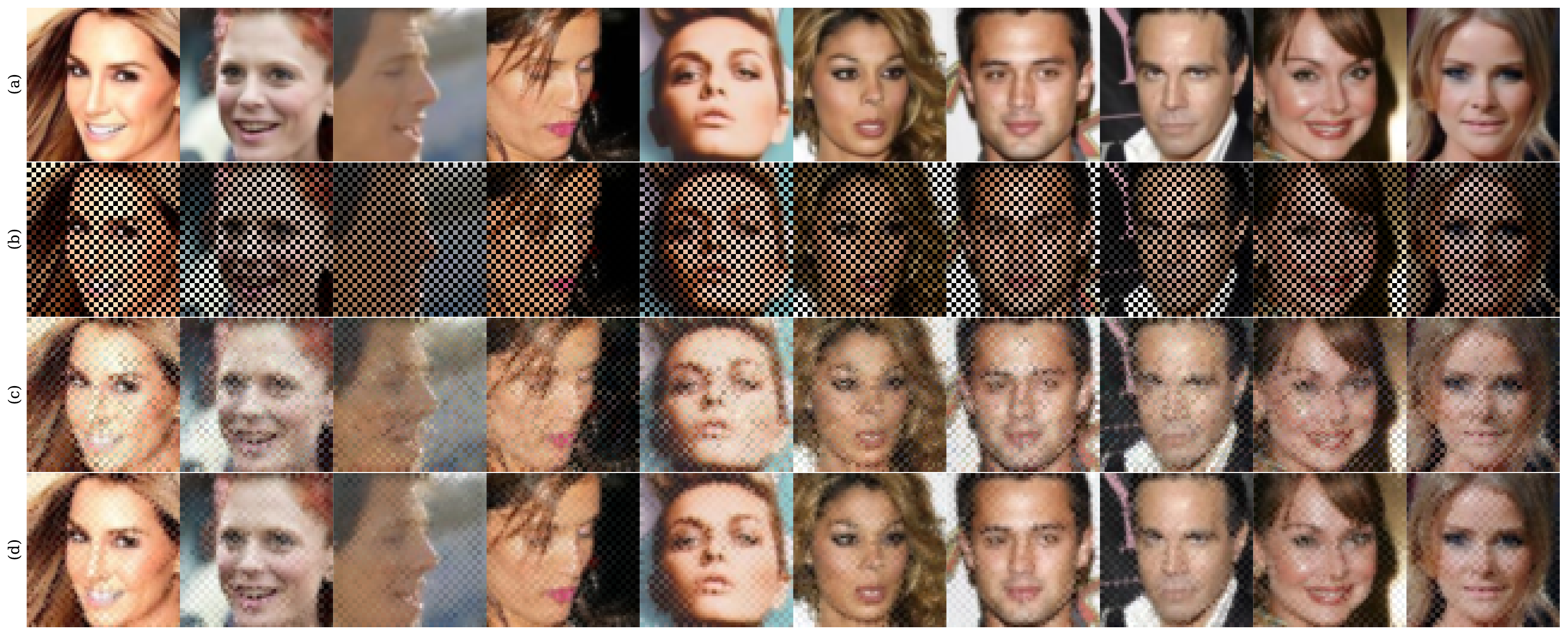}
    \caption{CelebA }
    \label{fig:zero-shot-checkerboard}
    \end{subfigure}
    \caption{Zero-shot masked image editing with a SIGN model trained. From \textit{top} to \textit{bottom}: (1) Original images. (2) Masked inputs. (3) Single-step sampling results. (4) Multi-step sampling results.}
\end{figure}
\vspace{-.12in}
\section{Limitations}
Though not competitive with general SOTA generative models,
we perform better than current IGN models.
We show that IGN models can be stably
trained using non-adverserial losses.
 As \cite{jensenenforcing} show, idempotent networks are a nascent line of generative models, and there is significant room for improvement in the inductive biases and training methods required to optimize these models.
 Intuitively, using a single network to learn both a large projection mapping from noise to the data manifold, while also learning identity mapping on the manifold, is difficult, as the objective is significantly different from the usual generative modeling tasks. 
 As a result, we cannot take advantage of the standard practices of generative models and the wealth of accumulated knowledge from the usual generative modeling community. 
 We hypothesize, transformer-based and especially mixture-of-experts-based~\citealp{moe_understanding, ogmoe, visionMOE} networks may provide significantly improved performance and bring IGNs on par with other single-step and distilled models.
 We plan on exploring architectural choices to improve model capabilities along with larger models, higher training compute budgets, and larger, modern datasets.
 Furthermore, our work greatly improves the training stability of idempotent generative models, which is a key requirement in enabling future work on high-resolution datasets. 
 The regression loss requires pre-generation of a large set of images, requiring a large amount of compute and memory. However, the regression loss is not required, as ~\cite{dmd2} shows that regression loss is not required to obtain strong generation results.
 We also acknowledge that our initial ablation study is limited due to computational resource constraints, but we plan to further investigate it when resources permit.
 
\section{Future Work}
A key feature of our work is the improved training dynamics of idempotent generative models by utilizing learned score functions. 
Due to training instability in the adversarial loss, prior work on IGNs is difficult to reproduce and focused on simpler datasets. 
Optimizing modeling choices and training method is challenging when training is unstable.
We hope our work will enable large-scale architectural optimization studies for idempotent networks that may be transferred to conventional IGN training as well.
Furthermore, the inductive bias of idempotent networks is a natural fit for many other learning and generative tasks, especially in scientific workloads.

SIGN models can be combined with existing score-based models in two ways: either by "fast-forwarding" the reverse process by inputting partially denoised samples to the SIGN generator, similar to denoising diffusion GANs \cite{diffusion_gan}, or by employing multi-step iterative sampling schemes inspired by \cite{shocher2023idempotent} and \cite{song2023consistency}. 
As noted in \cite{shocher2023idempotent}, the model may also benefit from a two-step approach as in \cite{LDM} instead of directly applying to the pixel-space. Furthermore, flow-matching methods such as rectified flows~\cite{liu2022flow} provide an attractive alternative to diffusion models as teacher models due to their straight trajectories.

\section{Conclusion}
In this work, we connect Idempotent generative models with score-based diffusion models. 
Our proposed new losses to train IGN models improve the training characteristics of IGNs and provide theoretical guarantees of our optimization methods, and strong empirical results for idempotent models. 
We provide a first baseline on CIFAR-10, as well as improving SOTA CelebA FID by more than 40\% for idempotent models.
We view this work as an initial step towards connecting IGNs with score-based generative models that allow the development of more powerful models. 
Connecting to score-based opens the door for transferring learning techniques, model architecture, and even learned weights to improve IGN models.
A more stable training algorithm can enable further exploration into identifying network architectures that are suitable for idempotent models and allow learning on large-scale, higher-resolution datasets. 
We plan on exploring such possibilities in the future. 


\paragraph{Acknowledgement}
We gratefully acknowledge use of the research computing resources of the Empire AI Consortium, Inc, with support from Empire State Development of the State of New York, the Simons Foundation, and the Secunda Family Foundation.

\paragraph{Further Details}
The complete proofs of all theorems in our work, discussed in Section~\ref{section:theoretical_anaysis}, are detailed in Appendix~\ref{appendix:proofs}. The source code used in our experiments and details regarding how to reproduce our experiments can be found in Appendix~\ref{appendix:training-details}.

\bibliography{bib}

\begin{thebibliography}{60}
\providecommand{\natexlab}[1]{#1}
\providecommand{\url}[1]{\texttt{#1}}
\expandafter\ifx\csname urlstyle\endcsname\relax
  \providecommand{\doi}[1]{doi: #1}\else
  \providecommand{\doi}{doi: \begingroup \urlstyle{rm}\Url}\fi

\bibitem[Xiao et~al.(2021)Xiao, Kreis, and Vahdat]{diffusion_gan}
Zhisheng Xiao, Karsten Kreis, and Arash Vahdat.
\newblock Tackling the generative learning trilemma with denoising diffusion gans.
\newblock \emph{arXiv preprint arXiv:2112.07804}, 2021.

\bibitem[Yu et~al.(2020)Yu, Li, Zhou, Malik, Davis, and Fritz]{mode-coverage-1}
Ning Yu, Ke~Li, Peng Zhou, Jitendra Malik, Larry Davis, and Mario Fritz.
\newblock Inclusive gan: Improving data and minority coverage in generative models.
\newblock In \emph{Computer Vision--ECCV 2020: 16th European Conference, Glasgow, UK, August 23--28, 2020, Proceedings, Part XXII 16}, pages 377--393. Springer, 2020.

\bibitem[Zhao et~al.(2018)Zhao, Ren, Yuan, Song, Goodman, and Ermon]{mode-coverage-2}
Shengjia Zhao, Hongyu Ren, Arianna Yuan, Jiaming Song, Noah Goodman, and Stefano Ermon.
\newblock Bias and generalization in deep generative models: An empirical study.
\newblock \emph{Advances in Neural Information Processing Systems}, 31, 2018.

\bibitem[Goodfellow et~al.(2020)Goodfellow, Pouget-Abadie, Mirza, Xu, Warde-Farley, Ozair, Courville, and Bengio]{GAN}
Ian Goodfellow, Jean Pouget-Abadie, Mehdi Mirza, Bing Xu, David Warde-Farley, Sherjil Ozair, Aaron Courville, and Yoshua Bengio.
\newblock Generative adversarial networks.
\newblock \emph{Communications of the ACM}, 63\penalty0 (11):\penalty0 139--144, 2020.

\bibitem[Kingma and Welling(2013)]{VAE}
Diederik~P Kingma and Max Welling.
\newblock Auto-encoding variational bayes.
\newblock \emph{arXiv preprint arXiv:1312.6114}, 2013.

\bibitem[Van Den~Oord et~al.(2016)Van Den~Oord, Kalchbrenner, and Kavukcuoglu]{van2016pixel}
A{\"a}ron Van Den~Oord, Nal Kalchbrenner, and Koray Kavukcuoglu.
\newblock Pixel recurrent neural networks.
\newblock In \emph{International conference on machine learning}, pages 1747--1756. PMLR, 2016.

\bibitem[Rezende and Mohamed(2015)]{normalizingflow1}
Danilo Rezende and Shakir Mohamed.
\newblock Variational inference with normalizing flows.
\newblock In \emph{International conference on machine learning}, pages 1530--1538. PMLR, 2015.

\bibitem[Dinh et~al.(2014)Dinh, Krueger, and Bengio]{normalizingflow2}
Laurent Dinh, David Krueger, and Yoshua Bengio.
\newblock Nice: Non-linear independent components estimation.
\newblock \emph{arXiv preprint arXiv:1410.8516}, 2014.

\bibitem[Ho et~al.(2019)Ho, Chen, Srinivas, Duan, and Abbeel]{ho2019flow++}
Jonathan Ho, Xi~Chen, Aravind Srinivas, Yan Duan, and Pieter Abbeel.
\newblock Flow++: Improving flow-based generative models with variational dequantization and architecture design.
\newblock In \emph{International Conference on Machine Learning}, pages 2722--2730. PMLR, 2019.

\bibitem[Sohl-Dickstein et~al.(2015)Sohl-Dickstein, Weiss, Maheswaranathan, and Ganguli]{sohl2015deep}
Jascha Sohl-Dickstein, Eric Weiss, Niru Maheswaranathan, and Surya Ganguli.
\newblock Deep unsupervised learning using nonequilibrium thermodynamics.
\newblock In \emph{International conference on machine learning}, pages 2256--2265. PMLR, 2015.

\bibitem[Ho et~al.(2020)Ho, Jain, and Abbeel]{ddpm}
Jonathan Ho, Ajay Jain, and Pieter Abbeel.
\newblock Denoising diffusion probabilistic models.
\newblock \emph{Advances in neural information processing systems}, 33:\penalty0 6840--6851, 2020.

\bibitem[Nichol and Dhariwal(2021)]{nichol2021improved}
Alexander~Quinn Nichol and Prafulla Dhariwal.
\newblock Improved denoising diffusion probabilistic models.
\newblock In \emph{International Conference on Machine Learning}, pages 8162--8171. PMLR, 2021.

\bibitem[Salimans et~al.(2016)Salimans, Goodfellow, Zaremba, Cheung, Radford, and Chen]{gan_diff}
Tim Salimans, Ian Goodfellow, Wojciech Zaremba, Vicki Cheung, Alec Radford, and Xi~Chen.
\newblock Improved techniques for training gans.
\newblock \emph{Advances in neural information processing systems}, 29, 2016.

\bibitem[Kodali et~al.(2017)Kodali, Abernethy, Hays, and Kira]{gan_convergence}
Naveen Kodali, Jacob Abernethy, James Hays, and Zsolt Kira.
\newblock On convergence and stability of gans.
\newblock \emph{arXiv preprint arXiv:1705.07215}, 2017.

\bibitem[Jo et~al.(2020)Jo, Yang, and Kim]{jo2020investigating}
Younghyun Jo, Sejong Yang, and Seon~Joo Kim.
\newblock Investigating loss functions for extreme super-resolution.
\newblock In \emph{Proceedings of the IEEE/CVF conference on computer vision and pattern recognition workshops}, pages 424--425, 2020.

\bibitem[Ho et~al.(2022{\natexlab{a}})Ho, Saharia, Chan, Fleet, Norouzi, and Salimans]{ho2022cascaded}
Jonathan Ho, Chitwan Saharia, William Chan, David~J Fleet, Mohammad Norouzi, and Tim Salimans.
\newblock Cascaded diffusion models for high fidelity image generation.
\newblock \emph{The Journal of Machine Learning Research}, 23\penalty0 (1):\penalty0 2249--2281, 2022{\natexlab{a}}.

\bibitem[Shocher et~al.(2023)Shocher, Dravid, Gandelsman, Mosseri, Rubinstein, and Efros]{shocher2023idempotent}
Assaf Shocher, Amil Dravid, Yossi Gandelsman, Inbar Mosseri, Michael Rubinstein, and Alexei~A Efros.
\newblock Idempotent generative network.
\newblock \emph{arXiv preprint arXiv:2311.01462}, 2023.

\bibitem[Ho et~al.(2022{\natexlab{b}})Ho, Chan, Saharia, Whang, Gao, Gritsenko, Kingma, Poole, Norouzi, Fleet, et~al.]{video-diff-1}
Jonathan Ho, William Chan, Chitwan Saharia, Jay Whang, Ruiqi Gao, Alexey Gritsenko, Diederik~P Kingma, Ben Poole, Mohammad Norouzi, David~J Fleet, et~al.
\newblock Imagen video: High definition video generation with diffusion models.
\newblock \emph{arXiv preprint arXiv:2210.02303}, 2022{\natexlab{b}}.

\bibitem[Mei and Patel(2023)]{video-diff-2}
Kangfu Mei and Vishal Patel.
\newblock Vidm: Video implicit diffusion models.
\newblock In \emph{Proceedings of the AAAI Conference on Artificial Intelligence}, volume~37, pages 9117--9125, 2023.

\bibitem[Kong et~al.(2020)Kong, Ping, Huang, Zhao, and Catanzaro]{audio-diff}
Zhifeng Kong, Wei Ping, Jiaji Huang, Kexin Zhao, and Bryan Catanzaro.
\newblock Diffwave: A versatile diffusion model for audio synthesis.
\newblock \emph{arXiv preprint arXiv:2009.09761}, 2020.

\bibitem[Schneuing et~al.(2022)Schneuing, Du, Harris, Jamasb, Igashov, Du, Blundell, Li{\'o}, Gomes, Welling, et~al.]{mol-diff-1}
Arne Schneuing, Yuanqi Du, Charles Harris, Arian Jamasb, Ilia Igashov, Weitao Du, Tom Blundell, Pietro Li{\'o}, Carla Gomes, Max Welling, et~al.
\newblock Structure-based drug design with equivariant diffusion models.
\newblock \emph{arXiv preprint arXiv:2210.13695}, 2022.

\bibitem[Hoogeboom et~al.(2022)Hoogeboom, Satorras, Vignac, and Welling]{mol-diff-2}
Emiel Hoogeboom, V{\i}ctor~Garcia Satorras, Cl{\'e}ment Vignac, and Max Welling.
\newblock Equivariant diffusion for molecule generation in 3d.
\newblock In \emph{International conference on machine learning}, pages 8867--8887. PMLR, 2022.

\bibitem[Xu et~al.(2022)Xu, Yu, Song, Shi, Ermon, and Tang]{mol-diff-3}
Minkai Xu, Lantao Yu, Yang Song, Chence Shi, Stefano Ermon, and Jian Tang.
\newblock Geodiff: A geometric diffusion model for molecular conformation generation.
\newblock \emph{arXiv preprint arXiv:2203.02923}, 2022.

\bibitem[Corso et~al.(2022)Corso, St{\"a}rk, Jing, Barzilay, and Jaakkola]{protein-diff}
Gabriele Corso, Hannes St{\"a}rk, Bowen Jing, Regina Barzilay, and Tommi Jaakkola.
\newblock Diffdock: Diffusion steps, twists, and turns for molecular docking.
\newblock \emph{arXiv preprint arXiv:2210.01776}, 2022.

\bibitem[Song and Dhariwal(2023)]{ImprovedConsistency}
Yang Song and Prafulla Dhariwal.
\newblock Improved techniques for training consistency models.
\newblock \emph{arXiv preprint arXiv:2310.14189}, 2023.

\bibitem[Sauer et~al.(2023)Sauer, Lorenz, Blattmann, and Rombach]{ADD}
Axel Sauer, Dominik Lorenz, Andreas Blattmann, and Robin Rombach.
\newblock Adversarial diffusion distillation.
\newblock \emph{arXiv preprint arXiv:2311.17042}, 2023.

\bibitem[Song et~al.(2023)Song, Dhariwal, Chen, and Sutskever]{song2023consistency}
Yang Song, Prafulla Dhariwal, Mark Chen, and Ilya Sutskever.
\newblock Consistency models, 2023.
\newblock URL \url{https://arxiv.org/abs/2303.01469}.

\bibitem[Song et~al.(2020{\natexlab{a}})Song, Sohl-Dickstein, Kingma, Kumar, Ermon, and Poole]{song2020score}
Yang Song, Jascha Sohl-Dickstein, Diederik~P Kingma, Abhishek Kumar, Stefano Ermon, and Ben Poole.
\newblock Score-based generative modeling through stochastic differential equations.
\newblock \emph{arXiv preprint arXiv:2011.13456}, 2020{\natexlab{a}}.

\bibitem[Franceschi et~al.(2023{\natexlab{a}})Franceschi, Gartrell, Santos, Issenhuth, de~B{\'e}zenac, Chen, and Rakotomamonjy]{ParticleModel}
Jean-Yves Franceschi, Mike Gartrell, Ludovic~Dos Santos, Thibaut Issenhuth, Emmanuel de~B{\'e}zenac, Micka{\"e}l Chen, and Alain Rakotomamonjy.
\newblock Unifying gans and score-based diffusion as generative particle models.
\newblock \emph{arXiv preprint arXiv:2305.16150}, 2023{\natexlab{a}}.

\bibitem[Liu et~al.(2022{\natexlab{a}})Liu, Gong, and Liu]{flow-match-1}
Xingchao Liu, Chengyue Gong, and Qiang Liu.
\newblock Flow straight and fast: Learning to generate and transfer data with rectified flow.
\newblock \emph{arXiv preprint arXiv:2209.03003}, 2022{\natexlab{a}}.

\bibitem[Karras et~al.(2022)Karras, Aittala, Aila, and Laine]{karras2022elucidating}
Tero Karras, Miika Aittala, Timo Aila, and Samuli Laine.
\newblock Elucidating the design space of diffusion-based generative models.
\newblock \emph{Advances in Neural Information Processing Systems}, 35:\penalty0 26565--26577, 2022.

\bibitem[Lu et~al.(2022)Lu, Zhou, Bao, Chen, Li, and Zhu]{lu2022dpm}
Cheng Lu, Yuhao Zhou, Fan Bao, Jianfei Chen, Chongxuan Li, and Jun Zhu.
\newblock Dpm-solver: A fast ode solver for diffusion probabilistic model sampling in around 10 steps.
\newblock \emph{Advances in Neural Information Processing Systems}, 35:\penalty0 5775--5787, 2022.

\bibitem[Liu et~al.(2022{\natexlab{b}})Liu, Ren, Lin, and Zhao]{ode-solver-3}
Luping Liu, Yi~Ren, Zhijie Lin, and Zhou Zhao.
\newblock Pseudo numerical methods for diffusion models on manifolds.
\newblock \emph{arXiv preprint arXiv:2202.09778}, 2022{\natexlab{b}}.

\bibitem[Zhang et~al.(2022)Zhang, Tao, and Chen]{ode-solver-4}
Qinsheng Zhang, Molei Tao, and Yongxin Chen.
\newblock gddim: Generalized denoising diffusion implicit models.
\newblock \emph{arXiv preprint arXiv:2206.05564}, 2022.

\bibitem[Arjovsky and Bottou(2017)]{arjovsky2017towards}
Martin Arjovsky and L{\'e}on Bottou.
\newblock Towards principled methods for training generative adversarial networks.
\newblock \emph{arXiv preprint arXiv:1701.04862}, 2017.

\bibitem[Saxena and Cao(2021)]{saxena2021generative}
Divya Saxena and Jiannong Cao.
\newblock Generative adversarial networks (gans) challenges, solutions, and future directions.
\newblock \emph{ACM Computing Surveys (CSUR)}, 54\penalty0 (3):\penalty0 1--42, 2021.

\bibitem[Yin et~al.(2024{\natexlab{a}})Yin, Gharbi, Zhang, Shechtman, Durand, Freeman, and Park]{dmd}
Tianwei Yin, Micha{\"e}l Gharbi, Richard Zhang, Eli Shechtman, Fredo Durand, William~T Freeman, and Taesung Park.
\newblock One-step diffusion with distribution matching distillation.
\newblock In \emph{Proceedings of the IEEE/CVF conference on computer vision and pattern recognition}, pages 6613--6623, 2024{\natexlab{a}}.

\bibitem[Kamb and Ganguli(2024)]{kamb2024analytic}
Mason Kamb and Surya Ganguli.
\newblock An analytic theory of creativity in convolutional diffusion models.
\newblock \emph{arXiv preprint arXiv:2412.20292}, 2024.

\bibitem[Biroli et~al.(2024)Biroli, Bonnaire, De~Bortoli, and M{\'e}zard]{biroli2024dynamical}
Giulio Biroli, Tony Bonnaire, Valentin De~Bortoli, and Marc M{\'e}zard.
\newblock Dynamical regimes of diffusion models.
\newblock \emph{Nature Communications}, 15\penalty0 (1):\penalty0 9957, 2024.

\bibitem[De~Bortoli(2022)]{de2022convergence}
Valentin De~Bortoli.
\newblock Convergence of denoising diffusion models under the manifold hypothesis.
\newblock \emph{arXiv preprint arXiv:2208.05314}, 2022.

\bibitem[Liu(2022)]{flow-match-2}
Qiang Liu.
\newblock Rectified flow: A marginal preserving approach to optimal transport.
\newblock \emph{arXiv preprint arXiv:2209.14577}, 2022.

\bibitem[Lee et~al.(2024{\natexlab{a}})Lee, Lin, and Fanti]{improving-flow-match}
Sangyun Lee, Zinan Lin, and Giulia Fanti.
\newblock Improving the training of rectified flows.
\newblock \emph{Advances in neural information processing systems}, 37:\penalty0 63082--63109, 2024{\natexlab{a}}.

\bibitem[Jensen and Vicary(2025)]{jensenenforcing}
Nikolaj~Banke Jensen and Jamie Vicary.
\newblock Enforcing idempotency in neural networks.
\newblock In \emph{Forty-second International Conference on Machine Learning}, 2025.

\bibitem[Zhao et~al.(2016)Zhao, Mathieu, and LeCun]{EBGAN}
Junbo Zhao, Michael Mathieu, and Yann LeCun.
\newblock Energy-based generative adversarial network.
\newblock \emph{arXiv preprint arXiv:1609.03126}, 2016.

\bibitem[Durall et~al.(2020)Durall, Chatzimichailidis, Labus, and Keuper]{mode-collapse-1}
Ricard Durall, Avraam Chatzimichailidis, Peter Labus, and Janis Keuper.
\newblock Combating mode collapse in gan training: An empirical analysis using hessian eigenvalues.
\newblock \emph{arXiv preprint arXiv:2012.09673}, 2020.

\bibitem[Franceschi et~al.(2023{\natexlab{b}})Franceschi, Gartrell, Santos, Issenhuth, de~B{\'e}zenac, Chen, and Rakotomamonjy]{franceschi2023unifying}
Jean-Yves Franceschi, Mike Gartrell, Ludovic~Dos Santos, Thibaut Issenhuth, Emmanuel de~B{\'e}zenac, Micka{\"e}l Chen, and Alain Rakotomamonjy.
\newblock Unifying gans and score-based diffusion as generative particle models.
\newblock \emph{arXiv preprint arXiv:2305.16150}, 2023{\natexlab{b}}.

\bibitem[Lipman et~al.(2022)Lipman, Chen, Ben-Hamu, Nickel, and Le]{lipman2022flow}
Yaron Lipman, Ricky~TQ Chen, Heli Ben-Hamu, Maximilian Nickel, and Matt Le.
\newblock Flow matching for generative modeling.
\newblock \emph{arXiv preprint arXiv:2210.02747}, 2022.

\bibitem[Liu et~al.(2022{\natexlab{c}})Liu, Gong, and Liu]{liu2022flow}
Xingchao Liu, Chengyue Gong, and Qiang Liu.
\newblock Flow straight and fast: Learning to generate and transfer data with rectified flow.
\newblock \emph{arXiv preprint arXiv:2209.03003}, 2022{\natexlab{c}}.

\bibitem[Lee et~al.(2024{\natexlab{b}})Lee, Lin, and Fanti]{lee2024improving}
Sangyun Lee, Zinan Lin, and Giulia Fanti.
\newblock Improving the training of rectified flows.
\newblock \emph{Advances in neural information processing systems}, 37:\penalty0 63082--63109, 2024{\natexlab{b}}.

\bibitem[Deng(2012)]{mnist}
Li~Deng.
\newblock The mnist database of handwritten digit images for machine learning research.
\newblock \emph{IEEE Signal Processing Magazine}, 29\penalty0 (6):\penalty0 141--142, 2012.

\bibitem[Krizhevsky et~al.(2009)Krizhevsky, Hinton, et~al.]{cifar10}
Alex Krizhevsky, Geoffrey Hinton, et~al.
\newblock Learning multiple layers of features from tiny images.
\newblock 2009.

\bibitem[Liu et~al.(2015)Liu, Luo, Wang, and Tang]{liu2015faceattributes}
Ziwei Liu, Ping Luo, Xiaogang Wang, and Xiaoou Tang.
\newblock Deep learning face attributes in the wild.
\newblock In \emph{Proceedings of International Conference on Computer Vision (ICCV)}, December 2015.

\bibitem[Song et~al.(2022)Song, Meng, and Ermon]{song2022denoisingdiffusionimplicitmodels}
Jiaming Song, Chenlin Meng, and Stefano Ermon.
\newblock Denoising diffusion implicit models, 2022.
\newblock URL \url{https://arxiv.org/abs/2010.02502}.

\bibitem[Chen et~al.(2022)Chen, Deng, Wu, Gu, and Li]{moe_understanding}
Zixiang Chen, Yihe Deng, Yue Wu, Quanquan Gu, and Yuanzhi Li.
\newblock Towards understanding the mixture-of-experts layer in deep learning.
\newblock \emph{Advances in neural information processing systems}, 35:\penalty0 23049--23062, 2022.

\bibitem[Shazeer et~al.(2017)Shazeer, Mirhoseini, Maziarz, Davis, Le, Hinton, and Dean]{ogmoe}
Noam Shazeer, Azalia Mirhoseini, Krzysztof Maziarz, Andy Davis, Quoc Le, Geoffrey Hinton, and Jeff Dean.
\newblock Outrageously large neural networks: The sparsely-gated mixture-of-experts layer.
\newblock \emph{arXiv preprint arXiv:1701.06538}, 2017.

\bibitem[Riquelme et~al.(2021)Riquelme, Puigcerver, Mustafa, Neumann, Jenatton, Susano~Pinto, Keysers, and Houlsby]{visionMOE}
Carlos Riquelme, Joan Puigcerver, Basil Mustafa, Maxim Neumann, Rodolphe Jenatton, Andr{\'e} Susano~Pinto, Daniel Keysers, and Neil Houlsby.
\newblock Scaling vision with sparse mixture of experts.
\newblock \emph{Advances in Neural Information Processing Systems}, 34:\penalty0 8583--8595, 2021.

\bibitem[Yin et~al.(2024{\natexlab{b}})Yin, Gharbi, Park, Zhang, Shechtman, Durand, and Freeman]{dmd2}
Tianwei Yin, Micha{\"e}l Gharbi, Taesung Park, Richard Zhang, Eli Shechtman, Fredo Durand, and Bill Freeman.
\newblock Improved distribution matching distillation for fast image synthesis.
\newblock \emph{Advances in neural information processing systems}, 37:\penalty0 47455--47487, 2024{\natexlab{b}}.

\bibitem[Rombach et~al.(2022)Rombach, Blattmann, Lorenz, Esser, and Ommer]{LDM}
Robin Rombach, Andreas Blattmann, Dominik Lorenz, Patrick Esser, and Bj{\"o}rn Ommer.
\newblock High-resolution image synthesis with latent diffusion models.
\newblock In \emph{Proceedings of the IEEE/CVF conference on computer vision and pattern recognition}, pages 10684--10695, 2022.

\bibitem[Bloom et~al.(2025)Bloom, Brumberg, Fisk, Harrison, Hull, Ramasubramanian, Vliet, and Wing]{Bloom2025EmpireAI}
Stacie Bloom, Joshua~C. Brumberg, Ian Fisk, Robert~J. Harrison, Robert Hull, Melur Ramasubramanian, Krystyn~Van Vliet, and Jeannette Wing.
\newblock Empire {AI}: A new model for provisioning {AI} and {HPC} for academic research in the public good.
\newblock In \emph{Practice and Experience in Advanced Research Computing ({PEARC} '25)}, page~4, Columbus, OH, USA, July 2025. ACM.
\newblock \doi{10.1145/3708035.3736070}.
\newblock URL \url{https://doi.org/10.1145/3708035.3736070}.

\bibitem[Song et~al.(2020{\natexlab{b}})Song, Meng, and Ermon]{ddim}
Jiaming Song, Chenlin Meng, and Stefano Ermon.
\newblock Denoising diffusion implicit models.
\newblock \emph{arXiv preprint arXiv:2010.02502}, 2020{\natexlab{b}}.

\end{thebibliography}

\newpage
\appendix

\section{Algorithm}
\subsection{Training Algorithms}
\label{appnx:training}
\begin{algorithm}
\caption{Consistent Idempotent Training}
\begin{algorithmic}[1]
\label{alg:cign}
\STATE \textbf{Input:} Dataset $\mathcal{D}$, models $f_\theta$, $f_{\theta'}$, distance metric, $D(\cdot, \cdot)$, noising operator $O(\cdot, \cdot)$,  learning rate $\eta$, loss term hyperparameters $\lambda_f, \lambda_d, \lambda_r, \lambda_n$
    \WHILE{not converged}
    \STATE Copy $f_{\theta'} \leftarrow f_\theta$
    \STATE Sample $\x \sim \mathcal{D}$
    \STATE Sample $\z \sim \normalDist$
    \STATE Sample $n \sim \mathcal{U}([[1,T]])$
    \STATE $\x_{t_n} \leftarrow O(\x, t_n)$
    \STATE Obtain $\x_{t_s}$ from solving steps in Eq.~\ref{Eq:Euler-ODE}.
    \STATE $\x_{\text{recon}}, \x_{\text{sample}}  \leftarrow f_\theta(\x), f_\theta(\z)$
    \STATE $\x_{\text{idem}} \leftarrow f_{\theta}(f_{\theta'}(\z))$
    \STATE Copy $\x_{\text{clone}}  \leftarrow x_{\text{sample}}$
    \STATE $ y_n \leftarrow O(f_\theta(\x_{\text{sample}}),t_n. )$
    \STATE $\mathcal{L}_{\text{recon}} \leftarrow D(\x, \x_{\text{recon}})$
    \STATE $\mathcal{L}_{\text{idem}} \leftarrow D(\x_{\text{sample}}, \x_{\text{idem}})$
    \STATE $\mathcal{L}_{\text{denoise}} \leftarrow D(\x, f_\theta(\x_{t_n}))$
    \STATE $\mathcal{L}_{\text{flow}} \leftarrow D(f_\theta(\x_{t_n}), f_{\theta'}(\x_{t_s}))$ 
    \STATE $\mathcal{L}_{\text{DMD}} \leftarrow D(s_{\text{learned}}(y_n, n) -s_{\text{diffusion}}(y_n, n))$
    \STATE $\mathcal{L} \leftarrow \mathcal{L}_{\text{recon}} + \mathcal{L}_{\text{idem}} + \lambda_f\mathcal{L}_{\text{flow}} +
    \lambda_d\mathcal{L}_{\text{dmd}} +
    \lambda_r\mathcal{L}_{\text{reg}} +
    \lambda_n\mathcal{L}_{\text{denoise}}$
    \STATE $f_\theta \leftarrow f_\theta - \eta \nabla_\theta\mathcal{L}(f_\theta)$
    \ENDWHILE
\end{algorithmic}
\end{algorithm}

The learned diffusion score-model $s_{\text{learned}}$ is trained online as in ~\cite{dmd}.

\subsection{Sampling Algorithms}
\label{appnx:sampling}
\begin{algorithm}[h]
\caption{Recursive Sampling}
\label{alg:cign-rec-sample}
\begin{algorithmic}[1]
\STATE \textbf{Input:} Trained CIGN $\mathbf{f}_\theta(\cdot)$, initial noise $\mathbf{x}_T$
\STATE $\x \leftarrow \mathbf{f}_\theta(\x)$
\WHILE{not converged}
\STATE $\x \leftarrow \mathbf{f}_\theta(\x)$
\ENDWHILE
\end{algorithmic}
\end{algorithm}

\begin{algorithm}
\caption{Multistep Sampling with Editing}
\label{alg:cign-con-sample}
\begin{algorithmic}[1]
\STATE \textbf{Input:} Trained CIGN $\mathbf{f}_\theta(\cdot)$, initial noised data $\mathbf{x}'$, image mask $\mathbf{M}$
\STATE Noise schedule $0 < \sigma_{N} < \sigma_{N-1} \ldots < \sigma_1$
\STATE $\x \leftarrow \mathbf{f}_\theta(\x') \odot M$ + $\x' \odot (1-M)$
\FOR{$i = 1, i < N$}
\STATE Sample $\mathbf{z} \sim \normalDist$
\STATE $\x_{\tau} \leftarrow \x + \sigma_{i}z$
\STATE $\x \leftarrow \mathbf{f}_\theta(\x_\tau) \odot M$ + $\x \odot (1-M)$
\ENDFOR
\end{algorithmic}
\end{algorithm}

\subsection{IGN training pseudocode}
\label{appendix:ign-training}

To be self-consistent, in Alg.~\ref{alg:ign} we reproduce the training procedure for a standard IGN in pseudocode.
\begin{algorithm}
\caption{Training an idempotent generative network}
\label{alg:ign}
\begin{algorithmic}[1]
\STATE \textbf{Input:} Data set $\mathcal{D}$, models $\phi_\theta$, $\phi_{\theta^{`}}$, drift measure $\delta(\cdot, \cdot)$
    \WHILE{not converged}
    \STATE Sample $x \sim \mathcal{D}$
    \STATE Sample $z \sim \mathcal{N}(\mathbf{0,I})$
    \STATE Copy $\phi_{\theta^{`}} \leftarrow \phi_\theta$
    \STATE $x_{\text{recon}}, x_{\text{sample}}  \leftarrow \phi_\theta(x), \phi_\theta(z)$
    \STATE $x_{\text{idem}} \leftarrow \phi_{\theta^{`}}(z)$
    \STATE $x_{\text{tight}} \leftarrow \phi_{\theta^{`}}(z)$
    \STATE Copy $x_{\text{clone}}  \leftarrow x_{\text{sample}}$
    \STATE $x_{\text{proj}}  \leftarrow \phi_{\theta}(x_{\text{clone}})$
    \STATE $\mathcal{L}_{\text{recon}} \leftarrow \delta(x, x_{\text{recon}})$
    \STATE $\mathcal{L}_{\text{idem}} \leftarrow \delta(x_{\text{sample}}, x_{\text{idem}})$
    \STATE $\mathcal{L}_{\text{tight}} \leftarrow  -\delta(x_{\text{proj}}, x_{\text{clone}})$
    \STATE $\mathcal{L} \leftarrow \mathcal{L}_{\text{recon}} + \lambda_i\mathcal{L}_{\text{idem}} + \lambda_t\mathcal{L}_{\text{tight}}$
    \STATE $\phi_\theta \leftarrow \phi_\theta - \eta \nabla_\theta(\phi_\theta)$
    \ENDWHILE
\end{algorithmic}
\end{algorithm}

\section{Proofs}
\label{appendix:proofs}
\begin{theorem*}
Given a trained SIGN model, $f_\theta$, such that it is a measurable idempotent map, $f_\theta: \mathbb{R}^d \rightarrow \mathbb{R}^d$. Let $\dataProb$ be the true data distribution and $\mathcal{P}_{f_\theta} := f_\theta\#\dataProb$ its pushforward through $f_\theta$. Given the regular and connected manifolds $D$ and $M$, we have $\supp \dataProb = D$ and $\supp \mathcal{P}_f = M$, and the score functions are defined on $\forall x \in D \cap M$, if $\theta$ is the global minimum $\mathcal{L}_{\text{SIGN}}$, then $D=M$. 
\end{theorem*}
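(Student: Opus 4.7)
The plan is to prove $D = M$ by establishing both inclusions, leveraging the fact that every summand of $\Loss_{\text{SIGN}}$ is a non-negative functional and therefore must individually vanish at a global minimum. I interpret $\mathcal{P}_{f_\theta}$ as the distribution generated by $f_\theta$ acting on the Gaussian source $\z \sim \normalDist$, which is the natural reading consistent with the idempotent, flow, and DMD terms being evaluated on noise samples; under this reading both $D\subseteq M$ and $M\subseteq D$ require a nontrivial argument, each drawn from a different term in $\Loss_{\text{SIGN}}$.

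For the inclusion $D \subseteq M$, I would use $\Loss_{\text{recon}} = 0$. This forces $f_\theta(\x) = \x$ for $\dataProb$-almost every $\x$, and continuity of $f_\theta$ (standard for the neural parameterizations considered) extends the identity to all of $\supp \dataProb = D$. Since the Gaussian source has full support on $\mathbb{R}^d$, the image $f_\theta(\mathbb{R}^d)$ contains $f_\theta(D) = D$, so $D \subseteq M$. For the reverse inclusion $M \subseteq D$, I would use $\Loss_{\text{dmd}} = 0$. At the optimum, the generator-side score $s_{\text{learned}}(\cdot, n)$ and the data-side score $s_{\text{diffusion}}(\cdot, n)$ agree at each discrete noise level $t_n$ on the common domain where they are defined, $D \cap M$. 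The Gaussian noising operator $O(\cdot, t_n)$ smooths both $\mathcal{P}_{f_\theta}$ and $\dataProb$ into strictly positive $\mathcal{C}^\infty$ densities on all of $\mathbb{R}^d$, and path-integration of $\nabla \log$ along smooth curves upgrades score equality to equality of the full noised densities $\mathcal{P}_{f_\theta}^{\sigma(t_n)} = \dataProb^{\sigma(t_n)}$. Injectivity of the Gaussian heat semigroup then deconvolves this to $\mathcal{P}_{f_\theta} = \dataProb$, yielding $M = \supp \mathcal{P}_{f_\theta} = \supp \dataProb = D$ as closed sets. With the supports equal, the hypothesis that both $D$ and $M$ are regular, connected, $k$-dimensional $\mathcal{C}^2$ submanifolds of $\mathbb{R}^d$ forces them to coincide as manifolds.

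The main obstacle is the score-to-density step in the second inclusion. The DMD objective controls scores only on $D \cap M$, and only at a finite collection of noise levels $\{t_n\}$, so I must show this is enough to match the full noised densities on $\mathbb{R}^d$. The essential ingredients are: the smoothing action of Gaussian convolution, which embeds the noised supports inside a common open subset of $\mathbb{R}^d$ where both scores extend and agree by continuity; uniqueness of a normalized density with a prescribed score on a connected open set, obtained by integrating $\nabla \log$ along smooth paths; and injectivity of the Gaussian heat semigroup, which lets deconvolution succeed at any single $t_n > 0$. A secondary subtlety is promoting "$\dataProb$-almost every" to "everywhere on $D$" in the reconstruction step, which is routine under continuity of $f_\theta$ but should be acknowledged as a mild regularity input beyond what the bare hypotheses list.
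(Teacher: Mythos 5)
Your proposal is essentially sound as an argument, but it is a genuinely different route from the paper's, and it quietly proves a slightly different statement. The paper takes the definition $\mathcal{P}_{f_\theta} := f_\theta\#\dataProb$ literally and splits the two inclusions as follows: $D \subseteq M$ comes from the DMD term, read as equality of the \emph{tangential} scores of the manifold densities $q_D, q_M$ on $D \cap M$ (this is what the hypothesis that ``the score functions are defined on $D\cap M$'' is for), followed by a contradiction argument at a putative point of $D\setminus M$; the reverse inclusion $M \subseteq D$ comes not from any score argument but from idempotence together with the pushforward structure, namely $\supp(f_\theta\#\dataProb) \subseteq f_\theta(\supp\dataProb)$ and the fact that every point of $\mathrm{Im}(f_\theta)$ is a fixed point. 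Your decomposition swaps the roles of the losses: reconstruction (plus continuity and the full support of the source) gives $D\subseteq M$, and the DMD term gives $M\subseteq D$ by matching the Gaussian-smoothed densities and deconvolving via injectivity of the heat semigroup. That second step is, if anything, stronger than what the paper establishes (it yields equality of the distributions, not merely of their supports), and the deconvolution argument is clean; the paper's route stays at the level of manifold densities and supports and never invokes the reconstruction loss.

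Two caveats. First, you redefined $\mathcal{P}_{f_\theta}$ as the pushforward of the Gaussian source. Under the theorem's stated definition $f_\theta\#\dataProb$, your reconstruction step ($f_\theta=\mathrm{id}$ on $D$) already forces $f_\theta\#\dataProb=\dataProb$, so the statement as written follows immediately and none of the DMD machinery is needed; your reinterpretation is what makes the second half of your proof do real work, but it is not literally the statement posed, and the paper's own $M\subseteq D$ step (idempotence plus the pushforward of $\dataProb$) is exactly the ingredient your reading discards. Second, the claim that score equality ``on $D\cap M$'' extends ``by continuity'' to an open neighborhood is not valid as stated: $D\cap M$ is typically a lower-dimensional (measure-zero) set, and continuity gives no information off it. What actually rescues your argument is that the DMD residual is evaluated at $y_n = O(f_\theta(\z), t_n)$, whose law has a strictly positive smooth density on all of $\mathbb{R}^d$ for $t_n>0$, so at the optimum the noised scores agree almost everywhere on $\mathbb{R}^d$ directly; you should argue it that way rather than by extension from $D\cap M$, whose role in the paper is confined to the unnoised, on-manifold score comparison.
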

\begin{proof}
We start with our trained idempotent model $f_\theta$ and the definition of the manifold $M$. As the $\theta$ is the global minimizer of $\mathcal{L}_{\text{SIGN}}$, we have a perfectly idempotent model. We have $f_\theta: \mathbb{R}^d \rightarrow \mathbb{R}^d$, and 

$$
M := \text{Im}(f) = \{y: \exists x, y = f_\theta(x) \}
$$
where every $y \in M$ is a fixed point such that $f_\theta(y) = y$. 
We have a learned distribution distribution $p_{f_\theta} := f_{\theta}\#\dataProb$. We also have $\supp \dataProb = D$ and $\supp \mathcal{P}_f = M$. Finally, since we have that $\theta$ is the global minimizer of $\mathcal{L}_{\text{SIGN}}$, crucially $\mathcal{L}_{\text{DMD}}$ is minimized. Minimizing the distribution score matching loss results in a score equality over the support of distributions. 

On each $C^2$ manifold, with volume measures $\mu_M$ and $\mu_D$, let's define positive densities, $q_M$ and $q_D$ with respect to the volume measures, $\mu_M$ and $\mu_D$.

Therefore, the tangential scores of each manifold-density are equal: 
$$
\nabla_T \log q_M = \nabla_T \log q_D \quad \forall x \in D \cap M
$$
Since the densities must be normalized, the normalization constant is 0, and they must be equal. 
Assume for a contradiction that there exists $x_o \in D$ and $x_0 \notin M$. Thus, there is an open neighborhood $U$ of $x_0$ where $\dataProb$ has a positive manifold density $q_D > 0$. But since $x_0 \notin M$, $q_M \leq 0$ or does not exist in $U$. This contradicts the equality of manifold densities; thus, there is no $x_0$ such that $x_0 \in D$ but $x_0 \notin M$. Thus, we have $\supp(\dataProb) \subseteq \supp(\mathcal{P}_{f_\theta})$ and equivalently, $D \subseteq M$.

Minimizing $\mathcal{L}_{\text{SIGN}}$ also requires minimizing the $\mathcal{L}_{idem}$.
As $\mathcal{P}_{f_\theta}$ is the idempotent pushforward of $\dataProb$, we have $\supp \mathcal{P}_{f_\theta} \subseteq f_\theta(\supp \dataProb)$. Since, $\supp \dataProb \subseteq \supp \mathcal{P}_{f_\theta}$, we have, 
$f_\theta(\supp \dataProb) \subseteq f_\theta(\supp \mathcal{P}_{f_\theta})$.
From idempotent symmetry over the support, and have $f_\theta(\supp \mathcal{P}_{f_\theta}) = \supp \mathcal{P}_{f_\theta}$ and we have $\supp \mathcal{P}_{f_\theta} \subseteq \supp \dataProb$. Thus, we have $\supp(\mathcal{P}_{f_\theta}) \subseteq \supp(\dataProb) $ and equivalently, $M \subseteq D$.

Combining, $M \subseteq D$ and $D \subseteq M$, we van see that $D = M$ and $\mathcal{P}_{f_\theta} = \dataProb$, completing the proof.
\end{proof}

\begin{theorem*}

Denote the distribution learned by the trained SIGN model $f_\theta$ as $\mathcal{P}_\theta$. 
Assuming a large enough model capacity such that:
\begin{equation*}
    \exists \theta^* = \argmin_{\theta} \Loss_{\text{recon}} = 
\argmin_{\theta} \Loss_{\text{flow}} = 0
\end{equation*}
then the learned distribution $\mathcal{P}_\theta = \dataProb$, the true data distribution.

\end{theorem*}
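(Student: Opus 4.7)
The plan is to unpack what the two hypotheses buy us separately, then stitch them along the PF-ODE trajectory to conclude that the pushforward of the source noise by $f_\theta$ recovers $\mathcal{P}_{\text{data}}$.

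First, I would use the reconstruction hypothesis. Since $\Loss_{\text{recon}}(\theta^*) = 0$ and $D$ is a metric, the minimizer satisfies $f_{\theta^*}(\x) = \x$ for $\dataProb$-a.e.\ $\x$. This pins down the boundary behavior of $f_{\theta^*}$ on the clean data manifold: every $\x \in \supp \dataProb$ is a fixed point.

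Second, I would exploit the flow loss. With $\Loss_{\text{flow}}(\theta^*) = 0$, for every $\x \sim \dataProb$ and every $n \in \{1,\ldots,N\}$ we get $f_{\theta^*}(\x_{t_n}) = f_{\theta^*}(\x_{t_s})$, where $\x_{t_s}$ is the PF-ODE Euler update (Eq.~\ref{Eq:Euler-ODE}) of $\x_{t_n}$ toward the neighboring noise level $t_{n-1}$. Chaining this equality along the discretized trajectory $\{\x_{t_n}\}_{n=0}^N$ gives
\begin{equation*}
f_{\theta^*}(\x_{t_N}) = f_{\theta^*}(\x_{t_{N-1}}) = \cdots = f_{\theta^*}(\x_{t_1}) = f_{\theta^*}(\x_{t_0}).
\end{equation*}
Since $t_0 = \epsilon$ and $\dataProb^{\sigma(\epsilon)} \approx \dataProb$, the endpoint $\x_{t_0}$ lies on the data manifold, so by Step 1, $f_{\theta^*}(\x_{t_0}) = \x_{t_0} = \x$. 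Therefore $f_{\theta^*}(\x_{t_N}) = \x$ almost surely along trajectories generated from $\dataProb$.

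Third, I would convert this pointwise-along-trajectories statement into an equality of distributions. The marginal of $\x_{t_N}$ is $\dataProb^{\sigma(T)} \approx \normalDist$ (after the standard variance normalization), which is the source distribution $\mathcal{P}_\z$ that defines $\mathcal{P}_\theta = (f_\theta)_{\#}\mathcal{P}_\z$. Viewing the joint law of $(\x, \x_{t_N})$ as a coupling whose $\x$-marginal is $\dataProb$ and whose $\x_{t_N}$-marginal is $\mathcal{P}_\z$, the identity $f_{\theta^*}(\x_{t_N}) = \x$ is exactly the statement that the pushforward of $\mathcal{P}_\z$ by $f_{\theta^*}$ coincides with $\dataProb$. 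Hence $\mathcal{P}_{\theta^*} = \dataProb$, completing the proof.

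The main obstacle, and the step where I would be most careful, is the distributional closure in Step 3. The flow and reconstruction losses only probe trajectories generated by noising samples drawn from $\dataProb$, so a priori they determine $f_{\theta^*}$ only on the union of these trajectories. I would need to argue either (i) that this union has full measure under the source $\mathcal{P}_\z$ (so that the pushforward identity holds everywhere that matters), or (ii) adopt the mild convention, as elsewhere in the paper, that $t_N = T$ is chosen large enough so $\dataProb^{\sigma(T)}$ and $\mathcal{P}_\z$ agree as sampling distributions. The rest of the argument, including the chaining along a finite discretization, is a direct telescoping of the zero-loss equalities.
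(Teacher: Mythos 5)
Your proposal is correct and follows essentially the same route as the paper's proof: zero flow loss gives $f_{\theta^*}(\x_{t_{n+1}}) = f_{\theta^*}(\x_{t_n})$, which is telescoped along the discretized PF-ODE trajectory, and zero reconstruction loss anchors the base case $f_{\theta^*}(\x_\epsilon) = \x_\epsilon$, yielding $f_{\theta^*}(\x_T) = \x_\epsilon$ and hence $\mathcal{P}_\theta = \dataProb$. Your Step 3 and the closing caveat about whether the noised-data trajectories cover the source distribution are in fact more explicit than the paper's own treatment, which simply asserts the distributional conclusion at that point.
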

\begin{proof}
\label{thm:1-proof}

Assuming the set of parameters $\theta^*$, the model $f_{\theta^*}$ minimizes the proposed flow loss, in Eq. (\ref{eq:loss-flow}), and thus we have,
\begin{equation*}
    d(f_{\theta^*}(\x_{t_{n+1})}, f_{\theta^*}(\x_{t_n})) = 0,
\end{equation*}
where $n\in [\epsilon, T - 1]$ denotes trajectory along the PF-ODE with different noise steps. By the definition of a metric function, we have,
\begin{equation}
   \label{eq:equal}
    f_{\theta^*}(\x_{t_{n+1}}) = f_{\theta^*}(\x_{t_n}).
\end{equation}
Now, let's consider the base case, $n = 0$ and $t_0 = \epsilon$. 
  We have:
\begin{align}
    d(f_{\theta^*}(\x_{t_{1}}), f_{\theta^*}(\x_{\epsilon})) &= 0 ,\nonumber\\
    d(f_{\theta^*}(\x_{t_{1}}), \x_{\epsilon}) &\overset{(a)}{=} 0 ,\nonumber\\
    \label{eq:initial}
    f_{\theta^*}(\x_{t_1}) &\overset{(b)}{=} \x_{\epsilon}
\end{align}
where (a) is due to $f_{\theta^*}$ minimizing the reconstruction loss, therefore, $\forall \x_\epsilon$, $f_{\theta'}(\x_\epsilon) = \x_\epsilon$ and (b) is due to the definition of the distance metric. By Eq. (\ref{eq:equal}), Eq. (\ref{eq:initial}), and mathematical induction, we will have $f_{\theta^*}(\x_{T}) = \x_{\epsilon}$. In other words, for all random noise sampled from the source $\x_{T} \sim \mathcal{Z}$,  after applying the learned CIGN transformation $f_{\theta^*}$, will fall in the terminal distribution $\x_{\epsilon } \sim \mathcal{P}^\epsilon$, which is the data distribution, $\dataProb$. 
\end{proof}

\begin{theorem*}
    Let $\Delta = \max{|\sigma(t_{n+1})-\sigma(t_n)|}$ for $n \in \{0, N-1\}$ and $f$ be the optimal idempotent function. For some learned model $f_\theta$ which satisfies the $L$-Lipschitz condition. Denote $\{\x_t\}_{t\in[\epsilon,T]}$ the exact PF-ODE trajectory by updating using Eq.~\ref{eq:PF-ODE}, and  $\{\hat{\x}_t\}_{t\in[\epsilon,T]}$ the empirical results by Eq.~\ref{Eq:Euler-ODE} (i.e., using $\x_{t_{n+1}}$ to solve step $n$ in Eq.2 gives the resulting  $\hat{\x}_{t_{n}}$).
    Assume the local approximation error of updating PF-ODE, $||\hat{\x}_{t_{n}} - \x_{t_n}||_2$, is uniformly bounded by, $\mathcal{O}((\sigma(t_{n+1})-\sigma(t_n))^{p+1})  \, \forall n \in \{1, N-1\}$  with $p \geq 1$, and $L \in \mathbb{R}_{\geq 0}$,
    . If $\Loss_{\text{Flow}}(\theta)$ = 0, and $\Loss_{\text{Recon}}(\theta)$ = 0, then we have, 
    $$ \sup_{\x_{t_n}} || f_\theta(\x_{t_n}) - f(\x_{t_n})||_2= \mathcal{O}((\Delta)^p)$$
\end{theorem*}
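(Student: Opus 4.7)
The plan is to adapt the classical global-error argument for one-step ODE integrators (in the spirit of Theorem~1 of Song et al.'s consistency-models paper) to the idempotent setting. I will induct on the noise index $n$, using $\Loss_{\text{Recon}}(\theta)=0$ to kill the base case on the data manifold, and using $\Loss_{\text{Flow}}(\theta)=0$ together with the $L$-Lipschitz property and the local truncation bound to propagate the error from step $n$ to step $n{+}1$. Writing $e_n := \sup_{\x_{t_n}} \bigl\| f_\theta(\x_{t_n}) - f(\x_{t_n}) \bigr\|_2$, the base case is immediate: since $t_0 = \epsilon$ and $\x_\epsilon \in \supp(\dataProb)$, the optimal idempotent map satisfies $f(\x_\epsilon) = \x_\epsilon$, and $\Loss_{\text{Recon}}(\theta) = 0$ likewise gives $f_\theta(\x_\epsilon) = \x_\epsilon$, so $e_0 = 0$.

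For the inductive step I will rely on two structural identities. First, with the pairing $\x_{t_n} \leftrightarrow \x_{t_{n+1}}$ and $\x_{t_s} \leftrightarrow \hat{\x}_{t_n}$ matching the Euler-step convention used in the theorem, vanishing of $\Loss_{\text{Flow}}(\theta)$ forces $f_\theta(\x_{t_{n+1}}) = f_\theta(\hat{\x}_{t_n})$ pointwise on the support of trajectories. Second, because $f$ is the optimal idempotent projection onto the data manifold and every exact PF-ODE trajectory terminates at a unique endpoint $\x_\epsilon$ on that manifold, $f$ is constant along the trajectory, so $f(\x_{t_{n+1}}) = f(\x_{t_n})$. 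Combining these with the triangle inequality, $L$-Lipschitzness, and the local truncation bound yields
\begin{align*}
e_{n+1}
&= \bigl\| f_\theta(\hat{\x}_{t_n}) - f(\x_{t_n}) \bigr\|_2 \\
&\le \bigl\| f_\theta(\hat{\x}_{t_n}) - f_\theta(\x_{t_n}) \bigr\|_2 + \bigl\| f_\theta(\x_{t_n}) - f(\x_{t_n}) \bigr\|_2 \\
&\le L\,\bigl\| \hat{\x}_{t_n} - \x_{t_n} \bigr\|_2 + e_n \\
&\le C L\,(\sigma(t_{n+1}) - \sigma(t_n))^{p+1} + e_n,
\end{align*}
for some constant $C > 0$ absorbing the big-$\mathcal{O}$ constant.

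Unrolling the recursion and using monotonicity of $\sigma$ to telescope the increments,
\begin{align*}
e_n \;\le\; C L \sum_{k=0}^{n-1} (\sigma(t_{k+1}) - \sigma(t_k))^{p+1}
\;\le\; C L \,\Delta^p \sum_{k=0}^{n-1} (\sigma(t_{k+1}) - \sigma(t_k))
\;\le\; C L \,(\sigma(T) - \sigma(\epsilon))\,\Delta^p,
\end{align*}
which is $\mathcal{O}(\Delta^p)$ uniformly in $n$, as claimed. Note that telescoping, rather than a naive $N \,\Delta^{p+1}$ bound, is what produces the correct order of $\Delta$.

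The hard part, I anticipate, is not the numerical recursion but making the two structural identities precise: converting $\Loss_{\text{Flow}}(\theta) = 0$ (an expectation of a nonnegative quantity) into the pointwise equality $f_\theta(\x_{t_{n+1}}) = f_\theta(\hat{\x}_{t_n})$ on the relevant support, and rigorously justifying ``$f$ is constant along each PF-ODE trajectory'' from the definition of the optimal idempotent projection together with the fact that $f$ fixes the data manifold. Once both are in place, the rest of the argument is mechanical induction.
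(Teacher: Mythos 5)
Your proposal matches the paper's own proof essentially step for step: the same error recursion $\e_{n+1} = f_\theta(\hat{\x}_{t_n}) - f_\theta(\x_{t_n}) + \e_n$ driven by $\Loss_{\text{Flow}}=0$ and the constancy of $f$ along trajectories, the same base case from $\Loss_{\text{Recon}}=0$, the same Lipschitz plus local-truncation bound, and the same telescoping to get $\mathcal{O}(\Delta^p)$. Your telescoping in terms of $\sigma(T)-\sigma(\epsilon)$ is in fact slightly cleaner than the paper's $t_n-\epsilon$, but the argument is the same.
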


\begin{proof}
\label{thm:2-proof}
    Recall the $\Loss_{\text{Flow}}$:
    $$
    \Loss_{\text{Flow}} = \E_{\x \sim \dataProb, n \sim \mathcal{U}[[1, N]]} [D(f_{\theta}(\x_{t_n}), f_{\theta'}(\x_{t_{s}}))].
    $$
    $ \Loss_{\text{Flow}} = 0$ implies $D(f_{\theta}(\x_{t_{n+1}}), f_{\theta}(\hat{\x}_{t_{n}})) = 0$ and thus $f_{\theta}(\x_{t_{n+1}}) = f_{\theta}(\hat{\x}_{t_{n}})$. Since $f$ is the optimal SIGN solution, we have $f(\x_{t_{n+1}}) = f(\x_{t_{n}})$. Denote error at noise level $n$ as $\e_n = f_{\theta}(\x_{t_n})-f(\x_{t_n}).$

We now form the recursion relation,
\begin{align}
    \e_{n+1} &= f_{\theta}(\x_{t_{n+1}})-f(\x_{t_{n+1}}) \\
    &= f_{\theta}(\hat{\x}_{t_n})-f(\x_{t_n}) \\
    &= f_{\theta}(\hat{\x}_{t_n}) - f_{\theta}(\x_{t_n}) + f_{\theta}(\x_{t_n})-f(\x_{t_n})\\
    &= f_{\theta}(\hat{\x}_{t_n}) - f_{\theta}(\x_{t_n}) + \e_n.
\end{align}
Due to the Lipschitz condition, we have 
$$|
|f_\theta(\hat{\x}_{t_n})-f_\theta(\x_{t_n})||_2 \leq L ||\hat{\x}_{t_n}-\x_{t_n}||_2.
$$
Thus, we can bound the error at noise-scale $n+1$ with, 
$$
||\e_{n+1}||_2 \leq ||\e_{n}||_2 + L ||\hat{\x}_{t_n}-\x_{t_n}||_2
$$
Furthermore, as the local approximation error, $||\hat{\x}_{t_n} - \x_{t_n}||_2$, is uniformly bounded by, $\mathcal{O}((\sigma(t_{n+1})-\sigma(t_n))^{p+1})$, and $L \in \mathbb{R}_{\geq 0}$, we have
$$
||\e_{n+1}||_2 \leq ||\e_{n}||_2 + \mathcal{O}(L(\sigma({n+1})-\sigma(n))^{p+1})
$$
For the base case  of $\e_\epsilon$, 
$f_\theta(\x_\epsilon) = \x_\epsilon$ as we assume $\Loss_{\text{Recon}}(\theta) = 0$  and by definition, $f(\x_\epsilon) = \x_\epsilon$, and thus we have $\e_\epsilon = 0$.

We can now bound the error $||\e_n||_2$, by induction on the error of previous noise levels, 
\begin{align}
    ||\e_n||_2 &\leq L||\hat{\x}_{t_{n-1}}-\x_{t_{n-1}}||_2 + ||\e_{n-1}||_2 \\
    &= \sum_{i=\epsilon}^{n-1} L \mathcal{O}((\sigma(t_{i+1})-\sigma(t_i)^{p+1}) \\
    &\leq \sum_{i=\epsilon}^{n-1} (\sigma(t_{i+1})-\sigma(t_i))(L\mathcal{O}(\Delta)^{p}). \\
    &= (L\mathcal{O}(\Delta)^{p}) (t_n-\epsilon)
\end{align}
As $t_n -\epsilon \leq t_N -\epsilon \leq C$, where C is some constant. We therefore have, 
\begin{equation}
    (L\mathcal{O}(\Delta)^{p}) (t_n-\epsilon) \leq C (\mathcal{O}(\Delta)^{p}).
\end{equation}
As $C$ and $L$ are constants and can be neglected compared to the exponential term, we have $C (L\mathcal{O}(\Delta)^{p}) = (\mathcal{O}(\Delta)^{p})$, which completes the proof. 
\end{proof}






\section{Experiment details}
\label{appendix:training-details}
\paragraph{Model Architecture}
Intuitively, we attempt to parameterize the model based on existing work on the consistency and diffusion models. We based our model on the same architecture as the pre-trained model, with our custom loss functions employed.

\paragraph{Training details}
We trained them on a system with 4 Nvidia H100 GPUs, using PyTorch as the framework. Full system can be found in \cite{Bloom2025EmpireAI}. Since the SIGN contains a subset of the parameters of the diffusion model, we initialize the SIGN using the parameters of a trained diffusion model. Unless otherwise specified, all hyperparameters were identical to those of the respective base models. We use the original noise schedule from EDM and DDIM respectively to train our SIGN models. We also employed techniques from Distribution Matching Distillation\cite{dmd} of using a pre-generated image to help our model get to the target manifold faster. For CIFAR-10, we started from ~\cite{karras2022elucidating}, with 200K pre-generated samples and trained for 200 epochs. For CelebA, we based our work on ~\cite{ddim}, with 500K pre-generated samples and trained for 350 epochs. To further ensure training stability, we initialized our models with pre-trained weights. 

\paragraph{Evaluation Setup}
For each dataset, we generated 50K unconditioned single-step samples from our trained model and used FID to evaluate their overall likeliness to the target dataset. 

\paragraph{Source Code}
The source code and instructions to run the experiment can be acquired through this link: \url{http://github.com/szaman19/projected-Paths}

\end{document}